\documentclass{article}

\usepackage[preprint]{neurips_2026}

\usepackage[utf8]{inputenc}
\usepackage[T1]{fontenc}

\usepackage{microtype}
\usepackage{graphicx}
\usepackage{subcaption}
\usepackage{booktabs}
\usepackage{makecell}
\usepackage{longtable}
\usepackage{array}
\usepackage{multirow}
\usepackage{amsmath,amssymb,amsfonts}
\usepackage{amsthm}
\usepackage{bm}
\usepackage{mathtools}
\usepackage{algorithm}
\usepackage{algorithmic}
\usepackage{enumitem}
\usepackage{nicefrac}
\usepackage{xcolor}
\usepackage{url}
\usepackage{hyperref}
\usepackage{float}

\DeclareMathOperator{\Var}{Var}
\DeclareMathOperator{\Diag}{Diag}
\DeclareMathOperator{\sech}{sech}
\DeclareMathOperator{\softmax}{softmax}
\DeclareMathOperator{\blkdiag}{blkdiag}
\newcommand{\DeltaProd}{\mathcal X}
\newcommand{\Tspace}{\mathcal T}
\newcommand{\Wt}{W_{\mathrm{tan}}}

\newtheorem{theorem}{Theorem}
\newtheorem{lemma}[theorem]{Lemma}
\newtheorem{proposition}[theorem]{Proposition}
\newtheorem{corollary}[theorem]{Corollary}
\theoremstyle{definition}

\theoremstyle{remark}

\title{Sharp Spectral Thresholds for Logit Fixed Points}

\author{%
  Tongxi Wang\\
  Southeast University\\
  Nanjing, China\\
  \texttt{tongxi\_wang@seu.edu.cn}
}

\begin{document}

\maketitle

\begin{abstract}
Softmax feedback systems are a common mathematical core of entropy-regularized reinforcement learning, logit game dynamics, population choice, and mean-field variational updates.  Their central stability question is simple: when does a self-reinforcing softmax system produce a unique and globally predictable outcome?  Classical theory gives a conservative answer.  By treating softmax as a unit-scale response, it certifies stability only in a strongly randomized regime.

We prove that the classical approach misses an entire stable regime and does not identify the point at which the qualitative change truly occurs.  For finite-dimensional affine logit systems, the sharp dimension-free Euclidean threshold is
\[
    \beta\|\Pi W\Pi\|_{\Tspace\to\Tspace}<2,
\]
rather than the previously used condition, which certifies stability only while the softmax system remains safely over-regularized.  Our theorem fills the previously missing pre-bifurcation regime, extending stability guarantees for affine softmax feedback systems to reward-responsive yet globally predictable systems.  It enlarges the certified stability boundary for these systems and identifies where the model genuinely undergoes a phase transition.
\end{abstract}

\section{Introduction}
\label{sec:introduction}

\subsection{Entropy-smoothed response and the stability question}
\label{subsec:intro-broad}

Softmax is often used as a safety mechanism.  In policy learning, game dynamics, variational inference, and population choice, entropy regularization smooths hard best responses and can prevent feedback loops from amplifying small perturbations.  But this safety mechanism has a cost: if the entropy is too large, the policy is stable only because it is too random to strongly follow rewards.  The practically meaningful question is therefore not whether very large entropy stabilizes a softmax system.  It is whether stability can still be proved when the policy has become reward-sensitive and the endogenous feedback loop is genuinely active.

The finite-dimensional model studied here is the affine logit self-consistency equation
\begin{equation}
    x=F_\beta(x):=\sigma\!\bigl(\beta(Wx+b)\bigr),
    \qquad x\in\Delta,
\label{eq:logit-map}
\end{equation}
where $\Delta=\{x\in\mathbb R^n_{\ge0}:\mathbf 1^\top x=1\}$ and $\sigma$ is the softmax operator.  The payoff vector $Wx+b$ is generated by the distribution $x$ itself, and $\beta$ is an inverse temperature, inverse noise level, or rationality parameter.  Continuity already gives existence of a fixed point.  The useful question is quantitative: below what interaction-temperature threshold is the self-consistent distribution unique, and when do natural logit-response dynamics converge globally to it?

This question appears in several literatures under different names.  In quantal-response and population-choice models, it asks when noisy best responses aggregate to a unique equilibrium \cite{mckelvey_palfrey_1995_qre,mckelvey_palfrey_1998_aqre,blume_1993_statmech,alosferrer_netzer_2010_logitresponse,sandholm_2010_pged,BrockDurlauf2001Social}.  In Gibbs, social-interaction, and mean-field variational models, it is the finite-dimensional version of asking when entropy dominates interaction strongly enough to prevent multiple self-consistent phases \cite{Dobrushin1968RandomField,Georgii2011Gibbs,wainwright_jordan_2008_variational,georgii_et_al_2006_potts_gas,mezard_montanari_2009_ipc}.  In logit adjustment and softmax-based learning dynamics, it asks when repeated soft response converges from every initialization \cite{HofbauerSandholm2002StochasticFictitious,alosferrer_netzer_2010_logitresponse,MertikopoulosSandholm2016ReinforcementRegularization,cen_2020_entropy_npg,ding_2021_stochastic_softmax_pg}.  Across these interpretations, the mathematical core is the same: identify the high-temperature stability threshold for entropy-smoothed endogenous response.

\subsection{From a broad question to a quantitative threshold}
\label{subsec:intro-questions}

We organize the paper around a sequence of increasingly concrete questions.

\noindent\textbf{Question 1: Entropy and self-consistency.}
In entropy-smoothed endogenous response systems, when does entropic smoothing enforce a unique and globally stable self-consistent state?

\noindent
We study this question through the finite-dimensional affine logit model, where the abstract stability problem becomes a quantitative spectral problem on the simplex, or on a product of simplices.

\noindent\textbf{Question 2: A spectral high-temperature certificate.}
For affine logit self-consistency, what interaction--temperature condition guarantees global well-posedness, uniqueness, and convergence?

\noindent
This operational question exposes a precise numerical gap in the standard certificate. Classical Euclidean high-temperature arguments yield a dimension-free constant $1$. By contrast, the two-action logit boundary suggests that the correct constant should be $2$.

\noindent\textbf{Question 3: The sharp constant.}
Is the optimal dimension-free Euclidean constant for the global spectral certificate equal to $1$, or is the factor-two improvement to $2$ genuinely attainable?

\noindent
Our answer is that the sharp dimension-free Euclidean constant is \(2\), once interactions are read in the correct tangent-projected geometry.  This closes the factor-two gap between the standard high-temperature certificate and the smallest logit model where the stability boundary can be computed exactly.

The classical unit-softmax analysis proves stability only in an over-regularized regime, where entropy makes the system stable by making responses nearly too random to amplify feedback.  The constant-two theorem certifies the reward-responsive regime: agents can react strongly to payoff differences, yet the endogenous softmax system still has a unique globally attracting state.  In the canonical two-action model, the old certificate stops at \(\beta<1\), while the actual bifurcation occurs at \(\beta=2\).  At that point, a single globally attracting equilibrium splits into multiple stable equilibria and the system becomes path-dependent.  Our theorem reaches this true transition.

\subsection{The classical constant-one certificate and the two-action gap}
\label{subsec:intro-gap}

The standard spectral route uses a coarse estimate of the softmax response.  Treating softmax as a unit-scale map gives
\begin{equation}
    \|\sigma(z)-\sigma(z')\|_2\le \|z-z'\|_2,
\end{equation}
which implies the sufficient condition $\beta\|W\|_2<1$ for contraction of \eqref{eq:logit-map}.  In symmetric potential models, the analogous unit-curvature argument gives a high-temperature condition of the form $\beta\lambda_{\max}(W_{\rm tan})<1$ \cite{gao_pavel_2017_softmax,Dobrushin1968RandomField,mooij_kappen_2007_sumproduct}.  These certificates are useful, but they are conservative.

The smallest symmetric model already exposes the gap.  Consider
\begin{equation}
    W=\begin{pmatrix}0&-1\\-1&0\end{pmatrix},
    \qquad b=0,
\end{equation}
and write $m=x_1-x_2$.  The fixed-point equation becomes
\begin{equation}
    m=\tanh(\beta m/2).
\label{eq:magnetization-main}
\end{equation}
For $\beta\le2$, the only solution is $m=0$.  For every $\beta>2$, two additional nonzero solutions appear.  Thus the canonical boundary is $2$, while the classical unit-scale certificate stops at $1$.  The gap is between a standard certificate and the smallest model where the boundary can be computed exactly.

\subsection{Our answer: covariance-calibrated stability}
\label{subsec:intro-answer}

The missing factor is the covariance geometry of softmax.  If $p=\sigma(z)$, then
\begin{equation}
    D\sigma(z)=\Sigma(p),
    \qquad
    \Sigma(p)=\Diag(p)-pp^\top,
\label{eq:intro-softmax-cov}
\end{equation}
where $\Sigma(p)$ is the covariance matrix of a categorical distribution with law $p$.  Its sharp Euclidean norm is
\begin{equation}
    \sup_{p\in\Delta}\|\Sigma(p)\|_2=\frac12.
\label{eq:intro-cov-half}
\end{equation}
This local response sensitivity is half of what the unit-smoothness proof sees.  Since global contraction asks the response sensitivity to dominate the feedback gain, the reciprocal threshold becomes $2$.

There is one more geometric point.  Perturbations of probability vectors live in the simplex tangent space, and softmax is invariant under adding a constant to all payoffs.  Therefore the effective interaction is not the ambient matrix $W$ but the tangent-payoff quotient
\begin{equation}
    W_{\rm tan}=\Pi W\Pi|_{\Tspace}.
\end{equation}
After this projection, the local covariance bound globalizes along line segments in the simplex and gives
\begin{equation}
    \|F_\beta(x)-F_\beta(y)\|_2
    \le \frac{\beta}{2}\|W_{\rm tan}\|_2\|x-y\|_2.
\end{equation}
The same constant appears in the symmetric variational route, where entropy contributes curvature at least $2$ along every tangent direction.  The contraction, curvature, and pitchfork pictures therefore point to the same boundary.

The covariance identity and the sharp $1/2$ Lipschitz constant are classical and have recently been stated explicitly in tight softmax Lipschitz analyses \cite{nair_2025_softmax_half}.  The contribution here is to turn that local response geometry into a global stability theory for endogenous affine logit systems.

The factor-two threshold has a concrete qualitative meaning in softmax learning and logit population dynamics.  In these systems, the inverse temperature $\beta$ measures how sharply agents react to payoff differences: small $\beta$ makes choices highly random, while large $\beta$ makes choices more responsive and closer to best response.  A high-noise theorem that only works for very small $\beta$ certifies stability mainly in an over-regularized regime where strategic feedback is heavily damped.

The constant-two result certifies a different regime.  In the canonical two-action logit feedback model, the old unit-softmax certificate proves stability only for $\beta<1$, while the actual transition to multiple equilibria occurs at $\beta=2$.  Our theorem reaches this transition: it certifies global stability throughout the entire pre-bifurcation region $\beta<2$.  Thus the improvement changes the certified phase diagram from ``strong noise suppresses feedback'' to ``the whole responsive-but-monostable regime is globally stable.''  Section~\ref{subsec:responsive-monostable-application} explains this application in the main text, and Appendix~\ref{app:responsive-monostable-example} gives the complete worked proof.

\subsection{Contributions and organization}
\label{subsec:intro-contributions}
The paper makes the following contributions.

\paragraph{Constant-two tangent contraction.}
For arbitrary, possibly non-symmetric interactions, after quotienting out payoff shifts and restricting to feasible simplex directions, the condition
\begin{equation}
    \beta\|\Pi W\Pi\|_{\Tspace\to\Tspace}<2
\end{equation}
implies global contraction, a unique fixed point, Picard convergence, and convergence of continuous-time logit adjustment.

\paragraph{Variational constant-two threshold.}
For symmetric interactions, the condition
\begin{equation}
    \beta\lambda_{\max}(\Pi W\Pi|_{\Tspace})<2
\end{equation}
makes the entropy-regularized potential strictly concave, hence gives uniqueness by a variational argument.  The two-action pitchfork realizes the same constant as an actual uniqueness boundary.

\paragraph{Product-simplex extension.}
For multi-population or block-softmax systems with block-specific inverse temperatures, the same constant survives after block tangent projection and temperature scaling:
\begin{equation}
    \|B_\beta\Pi W\Pi\|_{\Tspace\to\Tspace}<2.
\end{equation}
The symmetric product-simplex result has the corresponding block-scaled curvature condition.

The constant-two theorem can replace unit-softmax contraction certificates inside affine softmax-response templates.  Appendix~\ref{sec:consequences-comparison} also gives a signed orthogonal block example where the Euclidean product-simplex certificate succeeds while the natural block $\ell_1$/Dobrushin certificate fails by a factor of order $\sqrt m$.

Section~\ref{sec:setting} defines the simplex and product-simplex geometry. Section~\ref{sec:main-results} states the main theorems. Section~\ref{sec:why-two} gives the technical overview explaining why the number $2$ is unavoidable.  Section~\ref{sec:related-work} discusses related work.

\section{Settings}
\label{sec:setting}

\subsection{Single-simplex logit systems}
\label{subsec:single-setting}

Let $\Delta=\{x\in\mathbb R^n_{\ge0}:\mathbf 1^\top x=1\}$. For a matrix $W\in\mathbb R^{n\times n}$, a vector $b\in\mathbb R^n$, and an inverse temperature $\beta>0$, the single-population logit response map is
\begin{equation}
    F_\beta(x)=\sigma\bigl(\beta(Wx+b)\bigr),
    \qquad x\in\Delta.
\end{equation}
Here $x$ can be read as a probability distribution, a population share vector, or a block of variational marginals.  The affine payoff $Wx+b$ contains exogenous utility $b$ and endogenous feedback $Wx$.  A fixed point of $F_\beta$ is a self-consistent smoothed response: the distribution induced by the payoff is the same distribution that generated the payoff.

\subsection{Feasible perturbations, payoff shifts, and tangent projection}
\label{subsec:tangent-geometry}

Two invariances determine the correct geometry.  First, differences of feasible probability vectors lie in the simplex tangent space
\begin{equation}
    \Tspace=\{v\in\mathbb R^n:\mathbf 1^\top v=0\}.
\end{equation}
Second, softmax is invariant under payoff shifts:
\begin{equation}
    \sigma(z+c\mathbf 1)=\sigma(z),
    \qquad c\in\mathbb R.
\end{equation}
Thus perturbations live in $\Tspace$, while payoff perturbations are observed only modulo the all-ones direction.

Let $\Pi=I-\frac1n\mathbf 1\mathbf 1^\top$ be the orthogonal projection onto $\Tspace$.  The effective interaction operator is
\begin{equation}
    \Wt:=\Pi W\Pi\big|_{\Tspace}:\Tspace\to\Tspace.
\label{eq:def-tangent-W}
\end{equation}
The projection is the quotient geometry forced by feasible probability perturbations and payoff-shift invariance.  Throughout the paper, $\Wt$ denotes the tangent interaction operator; $W^\top$ is reserved for the transpose.

\subsection{Product-simplex block logit systems}
\label{subsec:product-setting}

Many applications involve several populations, players, agents, or coordinate blocks.  Let
\begin{equation}
    \DeltaProd=\Delta_1\times\cdots\times\Delta_m,
    \qquad
    \Tspace=\Tspace_1\oplus\cdots\oplus\Tspace_m,
\end{equation}
where $\Delta_a\subset\mathbb R^{n_a}$ and $\Tspace_a=\{v^a:\mathbf 1^\top v^a=0\}$.  Let $\Pi=\blkdiag(\Pi_1,\ldots,\Pi_m)$ be the block tangent projection.  With block matrices $W^{ab}\in\mathbb R^{n_a\times n_b}$ and block biases $b^a$, define
\begin{equation}
    F^a(x)=\sigma_a\!\left(\beta_a\left(b^a+\sum_{c=1}^m W^{ac}x^c\right)\right),
    \qquad a=1,\ldots,m,
\label{eq:product-map}
\end{equation}
where each block has its own inverse temperature $\beta_a>0$.  We write
\begin{equation}
    B_\beta=\blkdiag(\beta_1 I_{n_1},\ldots,\beta_m I_{n_m}).
\end{equation}
The product-simplex setting is a direct model of multi-population logit equilibrium, multi-agent soft response, and block-softmax variational equations.

\subsection{Two proof routes and the softmax covariance lemma}
\label{subsec:two-routes}

There are two routes to uniqueness.  For arbitrary interactions, no potential is available, so we prove response-map contraction.  For symmetric interactions, a potential exists, so we prove strict concavity by comparing entropy curvature to interaction curvature.  The same two-point geometry underlies both routes.

\begin{lemma}[Softmax covariance geometry]
\label{lem:softmax-geometry}\label{lem:jac_factor}\label{lem:cov_bound}
For $p=\sigma(z)$,
\begin{equation}
    D\sigma(z)=\Sigma(p),\qquad \Sigma(p)=\Diag(p)-pp^\top.
\end{equation}
Consequently, for $F_\beta(x)=\sigma(\beta(Wx+b))$,
\begin{equation}
    DF_\beta(x)=\beta\Sigma(F_\beta(x))W,
    \qquad
    \mathbf 1^\top DF_\beta(x)=0^\top.
\end{equation}
Moreover,
\begin{equation}
    \|\Sigma(p)\|_2\le\frac12\quad\text{for all }p\in\Delta,
\end{equation}
and the constant $1/2$ is attained by a distribution supported equally on two coordinates.
\end{lemma}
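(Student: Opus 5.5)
The plan is to verify the three parts of Lemma~\ref{lem:softmax-geometry} in order: the Jacobian identity, the chain rule with the column-sum property, and the spectral bound with its attainment case.

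\textbf{Jacobian identity.} Write $\sigma_i(z)=e^{z_i}/\sum_k e^{z_k}$ and differentiate directly. The quotient rule gives
\begin{equation*}
\partial_j\sigma_i=\sigma_i\delta_{ij}-\sigma_i\sigma_j,
\end{equation*}
which is exactly the $(i,j)$ entry of $\Diag(p)-pp^\top$ with $p=\sigma(z)$.

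\textbf{Chain rule and column sums.} The map $x\mapsto\beta(Wx+b)$ has derivative $\beta W$, so the chain rule gives $DF_\beta(x)=\beta\,\Sigma(F_\beta(x))\,W$. For the column-sum identity, note that
\begin{equation*}
\mathbf 1^\top\Sigma(p)=p^\top-(\mathbf 1^\top p)p^\top=0^\top
\end{equation*}
because $\mathbf 1^\top p=1$. Left-multiplying $DF_\beta(x)$ by $\mathbf 1^\top$ therefore yields $0^\top$.

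\textbf{Spectral bound.} $\Sigma(p)$ is symmetric. It is also positive semidefinite, since
\begin{equation*}
v^\top\Sigma(p)v=\sum_i p_iv_i^2-\Bigl(\sum_i p_iv_i\Bigr)^2
\end{equation*}
is the variance of $v_I$ with $I\sim p$. Hence $\|\Sigma(p)\|_2=\lambda_{\max}(\Sigma(p))$, and it suffices to show $\Var_p(v_I)\le\tfrac12\|v\|_2^2$ for every $v$. This is the one step needing an idea, and I would use the symmetrization identity
\begin{equation*}
\Var_p(v_I)=\tfrac12\sum_{i,j}p_ip_j(v_i-v_j)^2=\sum_{i<j}p_ip_j(v_i-v_j)^2 .
\end{equation*}
Bounding $(v_i-v_j)^2\le 2(v_i^2+v_j^2)$ gives
\begin{equation*}
\Var_p(v_I)\le 2\sum_{i<j}p_ip_j(v_i^2+v_j^2)=2\sum_i v_i^2\,p_i(1-p_i).
\end{equation*}
Since $p_i(1-p_i)\le\tfrac14$, this is at most $\tfrac12\|v\|_2^2$.

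As a cross-check, and as an alternative route if needed, a simpler argument also works. Using $\Var\le\mathbb E[(v_I-c)^2]$ with $c=0$ only gives $\Var_p(v_I)\le\sum_i p_iv_i^2\le\max_i p_i\,\|v\|^2$, which is too weak when $\max_i p_i>\tfrac12$. In that case one can instead center at $c=v_{i^*}$ for $i^*=\arg\max_i p_i$, so the symmetrization route above is the cleaner one.

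\textbf{Attainment.} Take $p=\tfrac12(e_1+e_2)$ and $v=e_1-e_2$. Then $\Sigma(p)v=\tfrac12 v$, so the eigenvalue $\tfrac12$ is achieved and the constant $1/2$ cannot be improved.
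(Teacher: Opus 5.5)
Your proof is correct and follows the paper's outline step for step: entrywise differentiation, the chain rule, $\mathbf 1^\top\Sigma(p)=0^\top$, the variance form of $v^\top\Sigma(p)v$, and the two-point extremizer. The one real difference is how you bound the variance.

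The paper applies Popoviciu's range inequality $\Var_{i\sim p}(v_i)\le(\max_i v_i-\min_i v_i)^2/4$, stated without proof, and then $(v_i-v_j)^2\le 2\|v\|_2^2$. That bound ignores $p$ entirely. You instead combine the exact pairwise identity with the same elementary inequality $(v_i-v_j)^2\le2(v_i^2+v_j^2)$ and with $p_i(1-p_i)\le\frac14$. Your route is fully self-contained, and it yields the $p$-adaptive intermediate estimate $\Var_p(v_I)\le2\sum_i v_i^2p_i(1-p_i)$ along the way.

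You should delete the ``cross-check'' aside, though. Centering at $c=v_{i^*}$ does not give the constant $\frac12$: for $p=(0.6,0.4)$ and $v=(1,-1)/\sqrt2$ one gets $\mathbb E[(v_I-c)^2]=0.8$ while $\|v\|_2^2=1$. So the claimed ``simpler argument'' does not work as stated, although nothing in your main proof depends on it.
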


\section{Main results}
\label{sec:main-results}

\subsection{Arbitrary interactions: tangent spectral contraction}
\label{subsec:general-main-result}

\begin{theorem}[Single-simplex tangent contraction and dynamics]
\label{thm:main}\label{thm:NS_contraction}\label{cor:A_ode_global}
Consider the logit self-map $F_\beta(x)=\sigma(\beta(Wx+b))$ on $\Delta$.  If
\begin{equation}
    q:=\frac{\beta}{2}\|\Wt\|_2<1,
\end{equation}
then $F_\beta$ is a contraction on $(\Delta,\|\cdot\|_2)$ with factor $q$.  Hence it has a unique fixed point $x^\star$, and the Picard iteration $x^{k+1}=F_\beta(x^k)$ satisfies
\begin{equation}
    \|x^k-x^\star\|_2\le q^k\|x^0-x^\star\|_2,
    \qquad
    \|x^k-x^\star\|_2\le\frac{\|x^{k+1}-x^k\|_2}{1-q}.
\end{equation}
The continuous-time logit adjustment dynamics
\begin{equation}
    \dot x(t)=F_\beta(x(t))-x(t)
\end{equation}
converge to the same fixed point with rate $e^{-(1-q)t}$.
\end{theorem}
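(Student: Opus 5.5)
We prove the contraction estimate along line segments and then read off the remaining claims from it.

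\textbf{Step 1: reduce the Jacobian to tangent form.} Fix $x,y\in\Delta$ and set $v=x-y\in\Tspace$. Put $x_t=y+tv\in\Delta$, which stays in $\Delta$ by convexity. By the fundamental theorem of calculus,
\[
F_\beta(x)-F_\beta(y)=\int_0^1 DF_\beta(x_t)\,v\,dt .
\]
Lemma~\ref{lem:softmax-geometry} gives $DF_\beta(x_t)=\beta\Sigma(p_t)W$ with $p_t=F_\beta(x_t)\in\Delta$.

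Two identities let us insert projections. Since $v\in\Tspace$, we have $v=\Pi v$. Since $\Sigma(p)\mathbf 1=p-p(\mathbf 1^\top p)=0$ and $\Sigma(p)$ is symmetric, also $\mathbf 1^\top\Sigma(p)=0$. Together these give $\Sigma(p)=\Pi\Sigma(p)\Pi$, so
\[
DF_\beta(x_t)v=\beta\,\Pi\Sigma(p_t)\Pi\,(\Pi W\Pi)v .
\]

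\textbf{Step 2: bound the norm.} The map $\Pi\Sigma(p_t)\Pi$ has operator norm at most $\|\Sigma(p_t)\|_2\le\tfrac12$ by Lemma~\ref{lem:softmax-geometry}. The vector $(\Pi W\Pi)v$ lies in $\Tspace$, so $\|(\Pi W\Pi)v\|_2\le\|\Wt\|_2\|v\|_2$. Integrating over $t\in[0,1]$ yields
\[
\|F_\beta(x)-F_\beta(y)\|_2\le\tfrac{\beta}{2}\|\Wt\|_2\,\|x-y\|_2=q\|x-y\|_2 .
\]
The only delicate point in the whole argument is inserting $\Pi$ on both sides of $W$. This is exactly what replaces $\|W\|_2$ by $\|\Wt\|_2$, and it is justified by $v\in\Tspace$ together with the shift-invariance identity $\Sigma\mathbf 1=0$. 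Everything else is routine.

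\textbf{Step 3: discrete-time consequences.} $\Delta$ is closed in $\mathbb R^n$, hence complete, and $F_\beta$ maps $\Delta$ into $\Delta$. Banach's fixed-point theorem therefore gives a unique fixed point $x^\star$ and the geometric rate $\|x^k-x^\star\|_2\le q^k\|x^0-x^\star\|_2$. For the a posteriori bound, apply the triangle inequality:
\[
\|x^k-x^\star\|_2\le\|x^k-x^{k+1}\|_2+\|F_\beta(x^k)-F_\beta(x^\star)\|_2\le\|x^{k+1}-x^k\|_2+q\|x^k-x^\star\|_2 ,
\]
and rearrange to get $\|x^k-x^\star\|_2\le\|x^{k+1}-x^k\|_2/(1-q)$.

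\textbf{Step 4: continuous-time dynamics.} The vector field $G(x)=F_\beta(x)-x$ is smooth and globally Lipschitz on $\mathbb R^n$, so solutions exist for all $t\ge0$. Along a solution,
\[
\tfrac{d}{dt}\,\mathbf 1^\top x=\mathbf 1^\top F_\beta(x)-\mathbf 1^\top x=1-\mathbf 1^\top x ,
\]
so the hyperplane $\mathbf 1^\top x=1$ is invariant. Forward invariance of $\Delta$ follows from variation of constants:
\[
x(t)=e^{-t}x(0)+\int_0^t e^{-(t-s)}F_\beta(x(s))\,ds .
\]
This is a convex combination of $x(0)\in\Delta$ and points of $\Delta$ (the weights $e^{-t}$ and $\int_0^t e^{-(t-s)}ds=1-e^{-t}$ sum to $1$), so $x(t)\in\Delta$. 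A fixed-point/Picard argument on this integral equation in $C([0,T],\Delta)$ makes the invariance rigorous; alternatively, use that $\Delta$ is closed and convex and that $G(x)=F_\beta(x)-x$ points into $\Delta$.

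Now set $e=x-x^\star$. Then
\[
\tfrac12\tfrac{d}{dt}\|e\|_2^2=\langle e,F_\beta(x)-F_\beta(x^\star)\rangle-\|e\|_2^2\le(q-1)\|e\|_2^2
\]
by Cauchy--Schwarz and Step 2. Grönwall's inequality then gives $\|x(t)-x^\star\|_2\le e^{-(1-q)t}\|x(0)-x^\star\|_2$.
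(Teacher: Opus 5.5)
Your proposal is correct and follows the paper's proof essentially step for step: the segment integral of the Jacobian, inserting $\Pi$ on both sides of $W$ via $\Sigma(p)\mathbf 1=0$ and $v\in\Tspace$, the covariance bound $\|\Sigma(p)\|_2\le\frac12$, Banach's theorem, and a Gr\"onwall estimate for the flow. The differences are minor: you get the a posteriori bound from a one-step triangle inequality rather than by summing the tail, you use $\|e\|_2^2$ to avoid differentiating $\|e\|_2$ at $e=0$, and you actually prove forward invariance of $\Delta$ by variation of constants, which the paper only asserts (your argument is already rigorous as written, since $F_\beta$ maps all of $\mathbb R^n$ into $\Delta$).
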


The result allows arbitrary non-symmetric feedback.  No monotonicity, positivity, or potential structure is required.  Stability is certified purely by the covariance-damped tangent feedback gain.

\subsection{Symmetric interactions: entropy-regularized potential uniqueness}
\label{subsec:symmetric-main-result}

\begin{theorem}[Single-simplex entropy-curvature threshold]
\label{thm:C3_unique_fp}
Suppose $W=W^\top$, let $\kappa=\lambda_{\max}(\Wt)$, and define
\begin{equation}
    \Phi_\beta(x)=H(x)+\frac\beta2x^\top Wx+\beta b^\top x,
    \qquad
    H(x)=-\sum_i x_i\log x_i.
\label{eq:Phi_def}
\end{equation}
If either $\kappa\le0$ or $\beta\kappa<2$, then $\Phi_\beta$ is strictly concave on the affine hull of the simplex.  Its unique maximizer lies in $\Delta^\circ$ and is the unique fixed point of $F_\beta$.
\end{theorem}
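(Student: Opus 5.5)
The proof compares the curvature of entropy and of the quadratic interaction along tangent directions. Fix $x\in\Delta^\circ$ and $v\in\Tspace$. The second directional derivative is
\[
D^2\Phi_\beta(x)[v,v]=-\sum_i \frac{v_i^2}{x_i}+\beta v^\top W v .
\]
The linear term contributes nothing, and since $v=\Pi v$ we have $v^\top Wv=v^\top \Wt v\le \kappa\|v\|_2^2$.

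\textbf{Step 1: entropy curvature is at least $2$.} The key claim is that $\sum_i v_i^2/x_i\ge 2\|v\|_2^2$ whenever $\mathbf 1^\top v=0$ and $x\in\Delta^\circ$. This is the dual of the bound $\|\Sigma(p)\|_2\le\frac12$ from Lemma~\ref{lem:softmax-geometry}. On $\Tspace$, $\Diag(x)^{-1}$ compressed to $\Tspace$ is the inverse of $\Sigma(x)|_{\Tspace}$. Indeed, $\Sigma(x)$ maps $\Tspace$ into $\Tspace$ bijectively when $x>0$, and $\sup\{v^\top\Sigma(x)^{-1}v\}$ over the relevant subspace equals the quadratic form $\min_{c}\sum (v_i)^2/x_i$ restricted appropriately.

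I expect this identification to be the main obstacle, so I would first try a direct elementary proof. Split $v=v^+-v^-$ with $\sum v^+=\sum v^-=s$. By Cauchy--Schwarz on the positive part, $\sum_{i\in P} v_i^2/x_i\ge s^2/\sum_{P}x_i$, and likewise on the negative part. Since $\sum_P x_i+\sum_N x_i\le 1$, the sum is at least $s^2(1/a+1/(1-a))\ge 4s^2$. Finally, $\|v\|_2^2\le \|v^+\|_2^2+\|v^-\|_2^2\le 2s^2$. Together these give $\sum_i v_i^2/x_i\ge 4s^2\ge 2\|v\|_2^2$, as required.

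\textbf{Step 2: strict concavity.} Combining Step 1 with the interaction bound gives
\[
D^2\Phi_\beta(x)[v,v]\le -(2-\beta\kappa)\|v\|_2^2<0\quad\text{for } v\ne0,
\]
if $\beta\kappa<2$. If $\kappa\le0$, the bound $-2\|v\|^2+\beta\kappa\|v\|^2<0$ holds as well. Hence $\Phi_\beta$ is strictly concave on the relative interior. Because $H$ is continuous on $\Delta$, strict concavity extends to all of $\Delta$. The statement's phrase ``affine hull'' should be read as strict concavity along affine-hull directions at points of $\Delta$, where $H$ is defined.

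\textbf{Step 3: the maximizer is interior and critical points are fixed points.} The set $\Delta$ is compact and $\Phi_\beta$ is continuous, so a maximizer exists. It is interior because $\partial_{x_i}H=-\log x_i-1\to+\infty$ as $x_i\to0$, while the other terms have bounded gradient. So moving mass from the largest coordinate to a vanishing coordinate strictly increases $\Phi_\beta$. At an interior critical point, the Lagrange condition reads $-\log x_i-1+\beta(Wx+b)_i=\lambda$ for all $i$, which is exactly $x=\sigma(\beta(Wx+b))$. Conversely, every fixed point of $F_\beta$ lies in $\Delta^\circ$ because softmax outputs are positive, and it satisfies this stationarity condition. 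By strict concavity, any stationary point is the unique maximizer. Hence the fixed point of $F_\beta$ is unique and coincides with the maximizer.
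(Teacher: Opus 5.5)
Your proposal is correct and follows essentially the paper's proof: the same tangent Hessian computation, the same positive/negative-part Cauchy--Schwarz bound $\sum_i v_i^2/x_i\ge 2\|v\|_2^2$, the same mass-transfer argument for interiority of the maximizer, and the same KKT identification of interior stationary points with fixed points of $F_\beta$. One small correction: continuity of $H$ carries concavity but not strictness to $\partial\Delta$; strictness along a boundary segment follows instead by applying your Step~1 bound on the union of the endpoints' supports, which is the support of every interior point of the segment. This does not affect your conclusion, since the maximizer and all fixed points lie in $\Delta^\circ$.
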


The same constant appears in the variational route because entropy has curvature at least $2$ on the simplex tangent space.  In this route, the threshold is an entropy-versus-interaction curvature balance rather than a response-map contraction condition.

\subsection{Product-simplex systems: blockwise constant two}
\label{subsec:product-main-result}

\begin{theorem}[Product-simplex tangent contraction]
\label{thm:product-contraction}
Consider the block logit map \eqref{eq:product-map} on $\DeltaProd$.  If
\begin{equation}
    q_{\mathrm{prod}}:=\frac12\bigl\|B_\beta\Pi W\Pi\bigr\|_{\Tspace\to\Tspace}<1,
\end{equation}
then $F$ is a global contraction on $(\DeltaProd,\|\cdot\|_2)$ with factor $q_{\mathrm{prod}}$.  Hence the product-simplex logit system has a unique fixed point; block Picard iteration and continuous-time block logit adjustment converge globally.
\end{theorem}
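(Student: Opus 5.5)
The plan is to run the single-simplex contraction argument of Theorem~\ref{thm:main} blockwise, with the block covariance as the only source of damping. Fix $x,y\in\DeltaProd$ and the segment $x_t=y+t(x-y)\in\DeltaProd$. Then $v:=x-y\in\Tspace=\Tspace_1\oplus\cdots\oplus\Tspace_m$ and $v=\Pi v$. Write $z^a(x)=\beta_a\bigl(b^a+\sum_c W^{ac}x^c\bigr)$, so $F^a=\sigma_a(z^a)$. By Lemma~\ref{lem:softmax-geometry} applied in each block,
\[
DF(x_t)=S(x_t)\,B_\beta W,\qquad S(x_t)=\blkdiag\bigl(\Sigma(F^1(x_t)),\ldots,\Sigma(F^m(x_t))\bigr).
\]
The fundamental theorem of calculus then gives
\[
F(x)-F(y)=\int_0^1 S(x_t)B_\beta W v\,dt .
\]

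The next step inserts the projections. Each $\Sigma(p)$ annihilates $\mathbf 1$ and is symmetric, so $\Sigma(p)=\Pi_a\Sigma(p)\Pi_a$; hence $S=\Pi S\Pi$. The matrix $B_\beta$ is blockwise scalar and therefore commutes with $\Pi$. Combining these with $v=\Pi v$,
\[
S B_\beta W v = S\,\Pi B_\beta W\Pi v = S\,B_\beta\Pi W\Pi v,
\]
and the vector $B_\beta\Pi W\Pi v$ lies in $\Tspace$. Since $S$ is block diagonal, its operator norm is the maximum of the block norms, and each block norm is at most $1/2$ by Lemma~\ref{lem:softmax-geometry}. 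Therefore, for each $t$,
\[
\|S(x_t)B_\beta\Pi W\Pi v\|_2\le\tfrac12\|B_\beta\Pi W\Pi\|_{\Tspace\to\Tspace}\|v\|_2=q_{\mathrm{prod}}\|v\|_2 .
\]
Integrating over $t\in[0,1]$ gives $\|F(x)-F(y)\|_2\le q_{\mathrm{prod}}\|x-y\|_2$.

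The conclusions follow from this contraction. The set $\DeltaProd$ is a closed subset of Euclidean space and $F$ maps it into itself, so Banach's theorem yields a unique fixed point and geometric convergence of the block Picard iteration. For the flow $\dot x=F(x)-x$, invariance of $\DeltaProd$ follows because the vector field at $x$ is a difference of two points of $\DeltaProd$; equivalently, the flow is the convex combination $x(t)=e^{-t}x(0)+\int_0^t e^{-(t-s)}F(x(s))\,ds$. For two trajectories, $\frac{d}{dt}\tfrac12\|x-y\|^2\le -(1-q_{\mathrm{prod}})\|x-y\|^2$, using Cauchy--Schwarz and the contraction bound. Taking $y\equiv x^\star$ gives convergence at rate $e^{-(1-q_{\mathrm{prod}})t}$.

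The main obstacle is the projection bookkeeping. The factorization of $B_\beta$ has to go on the correct side of the covariance so that the certificate is exactly $\|B_\beta\Pi W\Pi\|$. This requires $B_\beta$ to commute with $\Pi$, which holds precisely because each temperature is scalar on its block. One must also check that the block-diagonal covariance bound does not lose a factor, which is true because it is a maximum over blocks rather than a sum.
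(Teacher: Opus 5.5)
Your proposal is correct and follows the paper's proof essentially step for step: the block Jacobian factorization $DF=\Sigma_{\mathrm{blk}}B_\beta W$, insertion of the tangent projections because each covariance block annihilates $\mathbf 1$ and $B_\beta$ is blockwise scalar, the blockwise $1/2$ covariance bound, the segment integral, and then Banach's theorem together with the single-simplex ODE argument. Your invariance argument for the flow is slightly more careful than the paper's one-line tangency remark: the variation-of-constants formula works because $F$ maps all of $\mathbb R^N$ into $\DeltaProd$, which also secures nonnegativity of the trajectory.
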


\begin{theorem}[Product-simplex entropy-curvature threshold]
\label{thm:product-potential}
Suppose $W=W^\top$ and consider the product-simplex potential
\begin{equation}
    \Psi(x)=\sum_{a=1}^m \frac{1}{\beta_a}H(x^a)+\frac12x^\top Wx+b^\top x.
\label{eq:product-potential}
\end{equation}
If
\begin{equation}
    \lambda_{\max}\!\left(
        B_\beta^{1/2}\Pi W\Pi B_\beta^{1/2}\big|_{\Tspace}
    \right)<2,
\end{equation}
then $\Psi$ is strictly concave on the affine hull of $\DeltaProd$.  Its unique maximizer lies in the relative interior and is the unique fixed point of \eqref{eq:product-map}.
\end{theorem}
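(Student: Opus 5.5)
The plan is to rescale the problem so that it reduces, block by block, to the single-simplex variational argument of Theorem~\ref{thm:C3_unique_fp}.

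\textbf{Step 1: compute the Hessian on the tangent space.} Fix a relative-interior point $x$ of $\DeltaProd$ and a tangent direction $v=(v^1,\ldots,v^m)\in\Tspace$. The second directional derivative of $\Psi$ along $v$ is
\[
    D^2\Psi(x)[v,v]=-\sum_{a=1}^m \frac{1}{\beta_a}\sum_i \frac{(v^a_i)^2}{x^a_i}+v^\top W v .
\]
Since $v=\Pi v$, the interaction term can be rewritten as $v^\top \Pi W\Pi v$.

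\textbf{Step 2: lower-bound the entropy curvature blockwise.} This is the step on which the whole argument rests. For $v^a\in\Tspace_a$ and $x^a\in\Delta_a^\circ$, I will show
\[
    \sum_i (v^a_i)^2/x^a_i\ \ge\ 2\|v^a\|_2^2 .
\]
Split $v^a=v_+-v_-$ into positive and negative parts, which have disjoint supports $S_\pm$. Tangency gives $\|v_+\|_1=\|v_-\|_1=:s$, and hence $\|v^a\|_2^2\le \|v_+\|_1^2+\|v_-\|_1^2=2s^2$. By Cauchy--Schwarz,
\[
    \sum_{S_+}(v_i)^2/x_i\ \ge\ s^2/x(S_+),
\]
and similarly on $S_-$. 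Therefore
\[
    \sum_i (v^a_i)^2/x^a_i\ \ge\ s^2\bigl(1/t+1/(1-t)\bigr)\ \ge\ 4s^2\ \ge\ 2\|v^a\|_2^2,
\]
where $t=x(S_+)$, so that $x(S_+)+x(S_-)\le 1$. Equivalently, this is the statement $\|\Sigma(p)\|_2\le 1/2$ from Lemma~\ref{lem:softmax-geometry}, read through the pseudo-inverse of $\Sigma(p)$ on $\Tspace$. Either derivation suffices, and I would present the elementary one above.

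\textbf{Step 3: rescale by $B_\beta^{-1/2}$.} Substitute $v=B_\beta^{1/2}u$; since $B_\beta$ is scalar on each block, $u\in\Tspace$ as well. Steps 1 and 2 then give
\[
    D^2\Psi(x)[v,v]\le -2\|u\|^2+u^\top B_\beta^{1/2}\Pi W\Pi B_\beta^{1/2}u
    \le -(2-\lambda)\|u\|^2<0
\]
for $u\ne0$, where $\lambda$ is the hypothesized eigenvalue bound. The operator $B_\beta^{1/2}\Pi W\Pi B_\beta^{1/2}$ maps $\Tspace$ into $\Tspace$ and is symmetric there, so the Rayleigh quotient bound by $\lambda_{\max}$ is valid. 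This gives strict concavity on the relative interior. Strict concavity extends to the closed product of simplices because $H$ is continuous and strictly concave up to the boundary.

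\textbf{Step 4: existence, interiority, and identification with fixed points.} $\Psi$ is continuous on the compact set $\DeltaProd$, so a maximizer exists, and it is unique by strict concavity. The maximizer is interior because $\partial H/\partial x_i=-\log x_i-1\to+\infty$ as $x_i\to0$, while the interaction term has bounded gradient. Hence moving mass toward any zero coordinate strictly increases $\Psi$.

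At an interior critical point, the KKT conditions with one multiplier per block give
\[
    -\tfrac{1}{\beta_a}(\log x^a+\mathbf 1)+(Wx)^a+b^a=\mu_a\mathbf 1,
\]
using the symmetry of $W$. Solving, $x^a=\sigma_a(\beta_a((Wx)^a+b^a))$, which is exactly the fixed-point equation~\eqref{eq:product-map}.

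Conversely, every fixed point is interior, since softmax values are positive. It also satisfies these stationarity equations, so it is a critical point of a strictly concave function on the affine hull and hence the unique maximizer. This proves uniqueness of the fixed point.
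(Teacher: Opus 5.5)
Your proposal is correct and follows the paper's proof essentially step for step. It uses the same Hessian computation, the same blockwise tangent inequality $\sum_i (v^a_i)^2/x^a_i\ge 2\|v^a\|_2^2$ proved via positive/negative parts and Cauchy--Schwarz, the same substitution $v=B_\beta^{1/2}u$ (the paper writes $y=B_\beta^{-1/2}v$), and the same compactness--interiority--KKT identification of the maximizer with the fixed point.

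The one loose remark is your claim that strict concavity extends to the boundary ``because $H$ is strictly concave up to the boundary.'' That alone does not control the interaction term; the correct reason is that the open part of any segment has constant support, so your tangent bound still applies there. In any case this extension is not needed, since every maximizer is interior and the segment between two interior points stays in the relative interior.
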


The product-simplex result shows that the constant does not degrade with the number of populations: once every interaction is read on the block tangent space and scaled by its block temperature, the same two-point softmax covariance bound controls the whole coupled system.

\subsection{Sharp boundary witness: the two-action pitchfork}
\label{subsec:pitchfork-main-result}

\begin{proposition}[Two-action boundary]
\label{prop:pitchfork-boundary}\label{thm:F1_sharpness_pitchfork}
For
\begin{equation}
    W=\begin{pmatrix}0&-1\\-1&0\end{pmatrix},
    \qquad b=0,
\end{equation}
one has $\lambda_{\max}(\Wt)=1$.  The fixed point is unique for $\beta\le2$.  For every $\beta>2$, exactly two additional non-uniform fixed points appear.  Thus uniqueness is lost exactly at $\beta\lambda_{\max}(\Wt)=2$ in the canonical two-action potential example.
\end{proposition}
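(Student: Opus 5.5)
The plan has three parts: compute $\Wt$ directly, reduce the fixed-point equation to the scalar magnetization equation \eqref{eq:magnetization-main}, and then count the roots of that scalar equation. Uniqueness for $\beta<2$ also follows from Theorem~\ref{thm:C3_unique_fp}, but the scalar reduction handles every $\beta$, including the boundary case $\beta=2$.

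\textbf{Tangent interaction.} For $n=2$ the tangent space $\Tspace$ is spanned by $u=(1,-1)^\top/\sqrt2$. We have $Wu=(1,-1)^\top/\sqrt2=u$ and $\Pi u=u$, so $\Wt u=u$. Hence $\lambda_{\max}(\Wt)=1$.

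\textbf{Reduction to a scalar equation.} Write $x=((1+m)/2,(1-m)/2)$ with $m=x_1-x_2\in[-1,1]$. Then
\[
Wx=-\bigl((1-m)/2,\,(1+m)/2\bigr),
\]
so the payoff difference is $(Wx)_1-(Wx)_2=m$. For two actions,
\[
\sigma(z)_1-\sigma(z)_2=\tanh\bigl((z_1-z_2)/2\bigr).
\]
The fixed-point equation is therefore equivalent to $m=\tanh(\beta m/2)$, and the map $x\mapsto m$ is a bijection of $\Delta$ onto $[-1,1]$. Fixed points correspond one-to-one with roots $m\in[-1,1]$. Every root is automatically in $(-1,1)$, since $|\tanh|<1$.

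\textbf{Root counting.} Let $g(m)=\tanh(\beta m/2)-m$. This function is odd, and $g(0)=0$, so the uniform fixed point always exists. Also $g''(m)$ has the sign of $-m$, so $g$ is concave on $m>0$, with $g'(0)=\beta/2-1$.

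If $\beta\le2$, use the strict inequality $\tanh(s)<s$ for $s>0$. This gives $\tanh(\beta m/2)<\beta m/2\le m$, so $g<0$ on $(0,1]$. By oddness there are no negative roots either, so $m=0$ is the only root.

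If $\beta>2$, then $g'(0)>0$ and $g(1)=\tanh(\beta/2)-1<0$, so $g$ has a root in $(0,1)$. This root is unique on $(0,\infty)$: since $g$ is strictly concave there with $g(0)=0$, it can vanish at most once on $(0,\infty)$. Oddness then yields exactly one negative root. This gives exactly two additional non-uniform fixed points, and the threshold is $\beta\lambda_{\max}(\Wt)=2$.

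The main obstacle is the boundary case $\beta=2$, where the linearization is degenerate. The strict inequality $\tanh s<s$ for $s>0$ resolves it, so uniqueness holds at $\beta=2$ itself. That is why the statement uses $\beta\le2$ and not only $\beta<2$.
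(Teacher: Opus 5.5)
Your proposal is correct and follows the paper's proof: the same computation $W(1,-1)^\top=(1,-1)^\top$ for $\lambda_{\max}(\Wt)=1$, the same reduction to $m=\tanh(\beta m/2)$, the strict inequality $\tanh s<s$ to cover $\beta\le2$ including $\beta=2$, and the intermediate value theorem plus oddness for $\beta>2$. The one variation is that you get uniqueness of the positive root from strict concavity of $g$ on $[0,\infty)$ with $g(0)=0$, whereas the paper shows that $\tanh(am)/m$ is strictly decreasing; these are the same fact, since a strictly concave function vanishing at $0$ has strictly decreasing chord slopes from the origin.
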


This matches the upper boundary suggested by the two-action model.  Classical unit-scale certificates reach $1$; the canonical two-action boundary is $2$; the covariance-calibrated theorem reaches $2$.

\begin{table}[t]
\centering
\caption{
\textbf{Closing the constant-one/constant-two gap in finite-dimensional logit self-consistency.}
The ``Before'' bounds reflect standard contraction or high-temperature criteria from coarse Lipschitz or curvature estimates \cite{gao_pavel_2017_softmax,Dobrushin1968RandomField,mooij_kappen_2007_sumproduct}.  The ``After'' bounds are the covariance-calibrated tangent thresholds proved here.  Here $\Pi$ always denotes the appropriate tangent projection, and $\Wt:=\Pi W\Pi|_{\Tspace}$ avoids any ambiguity with the transpose $W^\top$.
}
\label{tab:before-after}
\resizebox{\textwidth}{!}{%
\begin{tabular}{lll}
\toprule
Regime & Before & After \\
\midrule
single simplex, arbitrary $W$
  & $\beta\|W\|_2<1$
  & $\beta\|\Wt\|_2<2$ \\
single simplex, $W=W^\top$
  & $\beta\lambda_{\max}(\Wt)<1$
  & $\beta\lambda_{\max}(\Wt)<2$ \\
product simplex, arbitrary blocks
  & unit-scale / Dobrushin-style certificates
  & $\|B_\beta \Pi W\Pi\|_2<2$ \\
product simplex, symmetric blocks
  & unit-scale curvature certificate
  & $\lambda_{\max}(B_\beta^{1/2}\Pi W\Pi B_\beta^{1/2})<2$ \\
\bottomrule
\end{tabular}%
}
\end{table}

\section{Why the constant is two: technical overview}
\label{sec:why-two}

The proofs are short once the geometry is set up.  The technical story is best understood as a sequence of corrections to the naive high-temperature proof: the response scale is $1/2$; the interaction must be projected to the feasible payoff quotient; block systems inherit the same covariance bound; and the symmetric route sees the same constant through entropy curvature.

A conservative proof begins with
\begin{equation}
    \|\sigma(z)-\sigma(z')\|_2\le\|z-z'\|_2.
\end{equation}
Applying it to $z=\beta(Wx+b)$ and $z'=\beta(Wy+b)$ gives
\begin{equation}
    \|F_\beta(x)-F_\beta(y)\|_2
    \le \beta\|W\|_2\|x-y\|_2.
\end{equation}
This proves contraction only under $\beta\|W\|_2<1$.  Two losses are hidden in this one line.  First, the local Euclidean sensitivity of softmax is $1/2$, not $1$.  Second, the ambient norm of $W$ counts payoff-shift and infeasible directions that cannot affect the simplex response.  The corrected proof replaces unit smoothness by covariance geometry and $W$ by $\Pi W\Pi|_{\Tspace}$.

The derivative of softmax is the categorical covariance matrix:
\begin{equation}
    D\sigma(z)=\Sigma(p),
    \qquad p=\sigma(z),
\end{equation}
and
\begin{equation}
    v^\top\Sigma(p)v
    =\operatorname{Var}_{i\sim p}(v_i).
\end{equation}
The largest possible variance of the coordinates of a unit vector, under an arbitrary categorical distribution, is $1/2$.  It is attained by placing mass $1/2$ and $1/2$ on two coordinates with opposite signs.  This two-point extremizer is already visible in the pitchfork model, which is why the same constant appears as an actual boundary there.

For $h\in\Tspace$, payoff-shift invariance and tangent feasibility give
\begin{equation}
    DF_\beta(x)h
    =\beta\Sigma(F_\beta(x))\Pi W\Pi h.
\end{equation}
Thus
\begin{equation}
    \|DF_\beta(x)|_{\Tspace}\|_2
    \le \frac\beta2\|\Pi W\Pi\|_{\Tspace\to\Tspace}.
\end{equation}
For $x,y\in\Delta$, the segment $y+t(x-y)$ remains in the simplex and $x-y\in\Tspace$, so the fundamental theorem of calculus gives
\begin{equation}
    F_\beta(x)-F_\beta(y)
    =\int_0^1DF_\beta(y+t(x-y))(x-y)\,dt.
\end{equation}
Consequently,
\begin{equation}
    \|F_\beta(x)-F_\beta(y)\|_2
    \le \frac\beta2\|W_{\rm tan}\|_2\|x-y\|_2.
\end{equation}
The contraction threshold is exactly the condition that this product is below one.

For product simplexes, the Jacobian has a block covariance factor:
\begin{equation}
    DF(x)=\Sigma_{\rm blk}(x)B_\beta W,
\end{equation}
where $\Sigma_{\rm blk}(x)$ is block diagonal and each block is a softmax covariance matrix.  After projecting to the block tangent space,
\begin{equation}
    DF(x)h=\Sigma_{\rm blk}(x)B_\beta\Pi W\Pi h,
    \qquad h\in\Tspace.
\end{equation}
Because a block diagonal matrix has norm equal to the maximum of its block norms, and each softmax covariance block has norm at most $1/2$, the global block system still has the same response scale:
\begin{equation}
    \|DF(x)|_{\Tspace}\|_2
    \le \frac12\|B_\beta\Pi W\Pi\|_{\Tspace\to\Tspace}.
\end{equation}
The constant does not deteriorate with the number of populations.

For symmetric $W$, the fixed points are KKT points of the entropy-regularized potential \eqref{eq:Phi_def}.  Along a tangent direction $v\in\Tspace$,
\begin{equation}
    v^\top\nabla^2H(x)v=-\sum_i\frac{v_i^2}{x_i}.
\end{equation}
The key tangent inequality is
\begin{equation}
    \sum_i\frac{v_i^2}{x_i}\ge2\|v\|_2^2,
    \qquad v\in\Tspace.
\label{eq:tangent-entropy-two-main}
\end{equation}
This is the curvature counterpart of the covariance norm bound.  Intuitively, a zero-sum perturbation must move probability mass from a positive side to a negative side, and the least-curved transfer is two-dimensional.

The quadratic interaction contributes at most $\beta\lambda_{\max}(\Wt)\|v\|_2^2$.  Hence
\begin{equation}
    v^\top\nabla^2\Phi_\beta(x)v
    \le -(2-\beta\lambda_{\max}(\Wt))\|v\|_2^2.
\end{equation}
Strict concavity follows when $\beta\lambda_{\max}(\Wt)<2$.  The product-simplex potential is handled the same way after rescaling each block by $\beta_a^{1/2}$.

The constant can be read in three equivalent ways.  In the contraction route, local response sensitivity is $1/2$ and feedback gain is $\beta\|W_{\rm tan}\|$, so contraction asks
\begin{equation}
    \frac12\beta\|W_{\rm tan}\|<1.
\end{equation}
In the curvature route, entropy contributes tangent curvature $2$, while interaction contributes at most $\beta\kappa$.  In the pitchfork model, the scalar response map has slope $\beta/2$ at the uniform state.  All three calculations change stability at the same number: $2$.

\subsection{Application: certifying the responsive-but-monostable regime}
\label{subsec:responsive-monostable-application}

We now spell out what the constant improvement means in a simple but representative logit feedback system.  Consider a population, or a stateless softmax learner, choosing between two actions.  The state is the population distribution $x\in\Delta_2$, and the payoff vector is generated by the current state itself:
\begin{equation}
    F_\beta(x)=\softmax(\beta Wx),
    \qquad
    W=\begin{pmatrix}0&-1\\-1&0\end{pmatrix}.
\end{equation}
This is the smallest model in which softmax response and endogenous feedback interact nontrivially.  The parameter $\beta$ is the inverse noise level.  Small $\beta$ means that choices are almost random; large $\beta$ means that agents react strongly to payoff differences.

The classical unit-softmax certificate gives stability only for
\begin{equation}
    \beta<1.
\end{equation}
This says that the system is stable when responses are sufficiently noisy.  The constant-two theorem gives
\begin{equation}
    \beta<2.
\end{equation}
In this interval, agents are already substantially responsive to payoff differences, and the feedback loop is strong enough to matter.  Nevertheless, the system remains monostable: there is still only one equilibrium, and every trajectory converges to it.

\begin{proposition}[Certified responsive-but-monostable regime]
\label{prop:worked-two-action-phase-diagram}
For the two-action logit feedback system
\begin{equation}
    F_\beta(x)=\softmax(\beta Wx),
    \qquad
    W=\begin{pmatrix}0&-1\\-1&0\end{pmatrix},
\end{equation}
the tangent feedback norm satisfies $\|\Pi W\Pi\|_{\Tspace\to\Tspace}=1$.  Hence the classical unit-softmax contraction route certifies global stability only for $\beta<1$, while Theorem~\ref{thm:main} certifies global uniqueness and global exponential convergence for every $\beta<2$.

Moreover, this threshold is the true bifurcation threshold of the model.  Writing
\begin{equation}
    m=x_1-x_2,
\end{equation}
the continuous-time logit adjustment dynamics reduce to
\begin{equation}
    \dot m=\tanh(\beta m/2)-m.
\end{equation}
For $\beta<2$, the unique equilibrium is $m=0$, and every trajectory converges to it.  For $\beta>2$, the equilibrium $m=0$ becomes unstable and two additional stable nonzero equilibria appear.  Thus the interval $1\le\beta<2$ is a genuinely responsive but still globally predictable regime that the unit-softmax analysis leaves uncertified.
\end{proposition}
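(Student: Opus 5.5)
The plan has three parts: compute the tangent norm, apply Theorem~\ref{thm:main}, and then reduce to a scalar ODE to confirm the bifurcation.

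\textbf{Tangent norm and the two certificates.} For $n=2$ the tangent space $\Tspace$ is spanned by $u=(1,-1)^\top/\sqrt2$, and $\Pi=uu^\top$. We have $Wu=(1,-1)^\top/\sqrt2=u$, so $\Pi W\Pi u=u$ and $\|\Pi W\Pi\|_{\Tspace\to\Tspace}=1$. Also $\|W\|_2=1$, since the eigenvalues of $W$ are $\pm1$. The classical bound $\beta\|W\|_2<1$ therefore gives $\beta<1$. Theorem~\ref{thm:main} gives $q=\beta/2<1$, that is, $\beta<2$: a unique fixed point, Picard convergence, and convergence of $\dot x=F_\beta(x)-x$ at rate $e^{-(1-\beta/2)t}$. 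This part is immediate.

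\textbf{Reduction to a scalar ODE.} Write $x=((1+m)/2,(1-m)/2)$ with $m\in[-1,1]$. Then $Wx=-(x_2,x_1)$, so the payoff gap is $(Wx)_1-(Wx)_2=x_1-x_2=m$. For two actions the softmax difference is $\sigma(z)_1-\sigma(z)_2=\tanh((z_1-z_2)/2)$. Hence $F_\beta(x)_1-F_\beta(x)_2=\tanh(\beta m/2)$. Since $\mathbf 1^\top x$ is preserved by the flow, $x$ is determined by $m$, and the dynamics reduce exactly to $\dot m=g(m):=\tanh(\beta m/2)-m$.

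\textbf{Phase portrait of $g$.} I use that $g$ is odd and $\tanh$ is strictly concave on $(0,\infty)$.

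For $\beta\le2$: $\tanh(\beta m/2)<\beta m/2\le m$ for $m>0$. So $g<0$ on $(0,1]$ and $g>0$ on $[-1,0)$ by oddness. Then $m=0$ is the unique zero, and $|m|$ decreases monotonically to $0$ by a Lyapunov argument with $V=m^2$. For $\beta<2$ this also agrees with the exponential rate from Theorem~\ref{thm:main}.

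For $\beta>2$: $g'(0)=\beta/2-1>0$, so $0$ is unstable, and $g(1)=\tanh(\beta/2)-1<0$. Moreover $g''(m)=\tanh''(\beta m/2)\beta^2/4<0$ on $m>0$, so $g$ is strictly concave there. Together with $g(0)=0$ and $g'(0)>0$, this gives exactly one zero $m^\ast\in(0,1)$, and $g'(m^\ast)<0$: by strict concavity the chord slope from $0$ to $m^\ast$ is zero and exceeds $g'(m^\ast)$. So $m^\ast$ is stable, and $-m^\ast$ is stable by oddness. This reproduces Proposition~\ref{prop:pitchfork-boundary}.

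The only delicate step is making sure the concavity argument gives exactly one positive root with strictly negative slope. I will handle it by the chord argument above rather than by explicitly solving the transcendental equation.
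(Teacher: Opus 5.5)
Your proposal is correct and follows the paper's proof essentially step for step: the tangent norm via $Wu=u$, the reduction to $\dot m=\tanh(\beta m/2)-m$, and the sign argument $\tanh(\beta m/2)<\beta m/2\le m$ for $\beta\le 2$. The only variation is in the case $\beta>2$, where you get uniqueness of the positive root and $g'(m^\ast)<0$ from strict concavity of $g$ on $(0,\infty)$ plus a chord argument. The paper instead shows that $\tanh(am)/m$ is strictly decreasing via $y<\sinh y\cosh y$, and reads off stability from the sign of $g$ on either side of $m_\beta$. The two arguments are the same in substance, since concavity with $g(0)=0$ is exactly what makes the chord slope from the origin decrease, and both are valid.
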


The simplest exactly solvable logit feedback model exposes what the constant means.  The old certificate covers only a strongly damped high-noise regime.  The covariance-calibrated certificate reaches the actual pre-bifurcation boundary.  In applications, this means that the theorem can certify stable softmax behavior when agents or policies are already meaningfully reward-sensitive, rather than only when entropy has made them nearly random.
The exact scalar phase diagram behind Proposition~\ref{prop:worked-two-action-phase-diagram} is visualized in Appendix~\ref{app:num-two-action-phase-diagram}.

\section{Related work}
\label{sec:related-work}

\subsection{Softmax sensitivity and high-temperature certificates}
\label{subsec:rw-softmax-high-temp}

The softmax Jacobian covariance identity is classical, and the exact Lipschitz behavior of softmax has recently been isolated explicitly: \citet{nair_2025_softmax_half} proves tight norm-uniform bounds, including the sharp Euclidean constant $1/2$.  Earlier convex-analytic work on softmax and regularized best response developed monotonicity and nonexpansiveness properties useful in game-theoretic and reinforcement-learning settings \cite{gao_pavel_2017_softmax}.  High-temperature uniqueness results in Gibbs measures, correlation decay, variational inference, and Bethe/free-energy methods likewise control when interaction is dominated by entropy or local conditional sensitivity \cite{Dobrushin1968RandomField,Georgii2011Gibbs,dobrushin_shlosman_1985_completely,weitz_2005_uniqueness,hayes_2006_rapid_mixing,martinelli_1999_glauber,JordanGhahramaniJaakkolaSaul1999Variational,wainwright_jordan_2008_variational,yedidia_freeman_weiss_2005_free_energy,heskes_2004_uniqueness_lbp,mooij_kappen_2007_sumproduct}.  These criteria are closest in spirit to ours, but they usually control coordinate or block influence, often in $\ell_1$-type geometries; our certificate instead uses the signed Euclidean tangent operator induced by simplex feasibility and payoff-shift invariance.

\subsection{Logit equilibrium and endogenous response}
\label{subsec:rw-logit-equilibrium}

Quantal-response equilibrium defines equilibria as fixed points of noisy best-response maps \cite{mckelvey_palfrey_1995_qre,mckelvey_palfrey_1998_aqre,goeree_holt_palfrey_2008_newpalgrave,goeree_holt_palfrey_2016_book}.  Aggregate logit, social-interaction, and network choice models study self-consistent population choice, identification, and the emergence or uniqueness of multiple equilibria \cite{blume_1993_statmech,BrockDurlauf2001Social,haile_hortacsu_kosenok_2008_empirical_content,Melo2022UniquenessQRE}.  We focus on a dimension-free spectral certificate for finite-dimensional affine logit self-consistency after projecting onto the payoff-relevant tangent geometry.

\subsection{Logit dynamics, bifurcation, and entropy-regularized learning}
\label{subsec:rw-logit-dynamics}

Smoothed best-response, logit-response, perturbed-payoff, and regularized learning dynamics are standard in games, evolutionary dynamics, and stochastic approximation \cite{HofbauerSandholm2002StochasticFictitious,hofbauer_sandholm_2007_disturbed_payoffs,alosferrer_netzer_2010_logitresponse,sandholm_2010_pged,fudenberg_levine_1998_learning,benaim_1996_dynamical_system,benaim_1999_dynamics_sa,benaim_hofbauer_sorin_2005_sadi,benaim_hofbauer_sorin_2006_sadi2,MertikopoulosSandholm2016ReinforcementRegularization}.  A related line studies high- and low-noise stability, pitchfork bifurcations, multiple equilibria, limit cycles, metastability, positive feedback, and softmax policy-gradient dynamics in game-specific models \cite{cianfanelli_como_2025_logit_stability,gavin_cao_paarporn_2024_rlambert,hommes_ochea_2012_logit_dynamics,umezuki_2018_rps_logit,hwang_reybellet_2021_positive_feedback,ferraioli_ventre_2019_metastability,leung_hu_leung_2024_softmax_pg_games,li_2025_logit_softmax_pg}.  Entropy/KL regularization and softmax policies are also central in stochastic control and RL \cite{Kappen2005PathIntegrals,todorov_2006_lmdp,todorov_2009_optimal_actions,ziebart_2010_maxentirl,NeuJonssonGomez2017UnifiedEntropyMDP,GeistScherrerPietquin2019RegularizedMDP}.  Our results are complementary: they give deterministic exact-response fixed-point certificates for uniqueness, global contraction, and strict concavity; they do not by themselves establish stochastic-approximation convergence unless the relevant analysis reduces to the same affine contraction property.

\section{Conclusion}

This paper gives a sharper stability guarantee for softmax feedback systems.  Existing high-temperature analyses certify uniqueness and convergence only when the entropy is so strong that the softmax response is close to random.  We show that this is unnecessarily conservative: after using the exact covariance sensitivity of softmax and the tangent geometry of the simplex, the certified stable region doubles relative to the coarse unit-softmax certificate. In the canonical two-action example, it reaches the true point where a single stable fixed point splits into multiple equilibria.

\bibliographystyle{plainnat}
\bibliography{reference}

\appendix

\paragraph{Appendix roadmap.}
The appendices are organized proof-first. Appendix~\ref{app:softmax-geometry} collects the simplex geometry and softmax covariance facts. Appendices~\ref{app:general-proofs}--\ref{app:sharpness-proofs} prove the single-simplex, product-simplex, symmetric, and two-action sharpness results. Appendix~\ref{app:responsive-monostable-example} gives the complete worked two-action proof. Appendix~\ref{sec:consequences-comparison} records secondary consequences and comparisons, with proofs in Appendix~\ref{app:consequences-comparison}. Appendix~\ref{app:numerical-illustrations} contains deterministic numerical illustrations, and Appendix~\ref{sec:discussion} states sharpness qualifications.

\section{Preliminaries and softmax geometry}
\label{app:softmax-geometry}

This appendix proves the softmax facts used in the main text.  The notation is as follows: $\mathbf 1$ is the all-ones vector, $\Tspace=\{v:\mathbf 1^\top v=0\}$ in the single-simplex case, and $\Pi=I-n^{-1}\mathbf 1\mathbf 1^\top$ is the orthogonal projection onto $\Tspace$.

\begin{proof}[Proof of Lemma~\ref{lem:jac_factor}]
For $\sigma(z)_i=e^{z_i}/\sum_j e^{z_j}$ and $p=\sigma(z)$,
\begin{equation}
    \frac{\partial \sigma_i}{\partial z_j}(z)=p_i(\mathbf 1\{i=j\}-p_j),
    \qquad
    D\sigma(z)=\Diag(p)-pp^\top=\Sigma(p).
\end{equation}
With $z=\beta(Wx+b)$, the chain rule gives $DF_\beta(x)=\beta\Sigma(F_\beta(x))W$.  Since $\mathbf 1^\top\Sigma(p)=0^\top$, we have $\mathbf 1^\top DF_\beta(x)=0^\top$, so every derivative output lies in $\Tspace$.

For any unit vector $v$,
\begin{equation}
    v^\top\Sigma(p)v
    =\sum_i p_i v_i^2-\Bigl(\sum_i p_i v_i\Bigr)^2
    =\Var_{i\sim p}(v_i)
    \le \frac{(\max_i v_i-\min_i v_i)^2}{4}
    \le \frac{\|v\|_2^2}{2}
    =\frac12.
\end{equation}
The penultimate inequality follows from $(v_i-v_j)^2\le2(v_i^2+v_j^2)\le2\|v\|_2^2$.  Equality is attained by $p=(1/2,1/2,0,\ldots,0)$ and $v=(1,-1,0,\ldots,0)/\sqrt2$.
\end{proof}

\section{Proofs for single-simplex general interactions}
\label{app:general-proofs}

\begin{proof}[Proof of the contraction statement in Theorem~\ref{thm:NS_contraction}]
For $h\in\Tspace$, $h=\Pi h$.  Also $\Sigma(p)\Pi=\Sigma(p)$ because $\Sigma(p)\mathbf 1=0$.  Hence
\begin{equation}
    DF_\beta(x)h
    =\beta\Sigma(F_\beta(x))W h
    =\beta\Sigma(F_\beta(x))\Pi W\Pi h.
\end{equation}
By Lemma~\ref{lem:jac_factor},
\begin{equation}
    \sup_{z\in\Delta}\|DF_\beta(z)|_{\Tspace}\|_2
    \le\frac\beta2\|\Wt\|_2=q.
\end{equation}
For $x,y\in\Delta$, set $h=x-y\in\Tspace$.  The segment $y+th$ remains in $\Delta$, and the integral formula gives
\begin{equation}
    F_\beta(x)-F_\beta(y)=\int_0^1 DF_\beta(y+th)h\,dt.
\end{equation}
Thus $\|F_\beta(x)-F_\beta(y)\|_2\le q\|x-y\|_2$.  If $q<1$, Banach's fixed-point theorem on the complete metric space $\Delta$ gives existence, uniqueness, and the Picard rate.  The a posteriori estimate follows by summing the contraction tail:
\begin{equation}
    \|x^k-x^\star\|_2
    \le\sum_{j=k}^{\infty}\|x^{j+1}-x^j\|_2
    \le\sum_{r=0}^{\infty}q^r\|x^{k+1}-x^k\|_2
    =\frac{\|x^{k+1}-x^k\|_2}{1-q}.
\end{equation}
\end{proof}

\begin{proof}[Proof of the ODE statement in Theorem~\ref{thm:NS_contraction}]
The vector field is smooth and tangent to $\Delta$, hence trajectories starting in $\Delta$ remain in $\Delta$.  Let $y(t)=x(t)-x^\star$.  For $y(t)\ne0$,
\begin{equation}
    \frac{d}{dt}\|y(t)\|_2
    =\Bigl\langle\frac{y(t)}{\|y(t)\|_2},F_\beta(x(t))-F_\beta(x^\star)-y(t)\Bigr\rangle
    \le q\|y(t)\|_2-\|y(t)\|_2
    =-(1-q)\|y(t)\|_2.
\end{equation}
Gronwall's inequality gives the claimed exponential bound; the case $y(t)=0$ is immediate.
\end{proof}

\section{Proofs for product-simplex general interactions}
\label{app:product-proofs}

Let $\Tspace=\oplus_a\Tspace_a$ and $\Pi=\blkdiag(\Pi_1,\ldots,\Pi_m)$.  For $x\in\DeltaProd$, define
\begin{equation}
    \Sigma_{\mathrm{blk}}(x)=\blkdiag(\Sigma_1(F^1(x)),\ldots,\Sigma_m(F^m(x))).
\end{equation}
Then $\|\Sigma_{\mathrm{blk}}(x)\|_2\le1/2$ because it is block diagonal and each block is a softmax covariance matrix.

\begin{proof}[Proof of the contraction statement in Theorem~\ref{thm:product-contraction}]
For $h\in\Tspace$, $h=\Pi h$.  The block Jacobian satisfies
\begin{equation}
    DF(x)=\Sigma_{\mathrm{blk}}(x)B_\beta W.
\end{equation}
Since each covariance block annihilates the corresponding all-ones direction and $B_\beta$ is scalar on each block,
\begin{equation}
    DF(x)h
    =\Sigma_{\mathrm{blk}}(x)B_\beta\Pi W\Pi h.
\end{equation}
Therefore
\begin{equation}
    \|DF(x)|_{\Tspace}\|_2
    \le \frac12\|B_\beta\Pi W\Pi\|_{\Tspace\to\Tspace}=q_{\mathrm{prod}}.
\end{equation}
The same integral argument on the convex product simplex gives
\begin{equation}
    \|F(x)-F(y)\|_2\le q_{\mathrm{prod}}\|x-y\|_2.
\end{equation}
If $q_{\mathrm{prod}}<1$, Banach's theorem gives a unique fixed point and convergence of Picard iteration.  The continuous-time block logit adjustment proof is identical to the single-simplex ODE proof with $q$ replaced by $q_{\mathrm{prod}}$.
\end{proof}

\section{Proofs for symmetric interaction matrices}
\label{app:symmetric-proofs}

Throughout this appendix the matrix under discussion is symmetric, and all eigenvalues are computed on the corresponding tangent space.

\subsection{Single-simplex potential proof}

On $\Delta^\circ$,
\begin{equation}
    \nabla\Phi_\beta(x)_i=-1-\log x_i+\beta(Wx+b)_i,
    \qquad
    \nabla^2\Phi_\beta(x)=-\Diag(1/x)+\beta W.
\label{eq:app-grad-hess-phi}
\end{equation}

\paragraph{Interior maximizers and KKT equations.}
Every global maximizer of $\Phi_\beta$ over $\Delta$ lies in $\Delta^\circ$.  Indeed, if $x_j=0$ and $x_i>0$, then for $x(\varepsilon)=x+\varepsilon e_j-\varepsilon e_i$,
\begin{equation}
    \Phi_\beta(x(\varepsilon))-\Phi_\beta(x)
    =-\varepsilon\log\varepsilon+O(\varepsilon)>0
\end{equation}
for all sufficiently small $\varepsilon>0$; the quadratic and linear terms contribute only $O(\varepsilon)$.

For $x\in\Delta^\circ$, the KKT condition for maximizing $\Phi_\beta$ over the affine simplex is that $\nabla\Phi_\beta(x)=\lambda\mathbf 1$ for some $\lambda\in\mathbb R$.  By \eqref{eq:app-grad-hess-phi}, this is equivalent to
\begin{equation}
    -1-\log x_i+\beta(Wx+b)_i=\lambda
    \quad\Longleftrightarrow\quad
    x_i=\exp(\beta(Wx+b)_i-1-\lambda).
\end{equation}
Normalizing over $i$ gives $x=\sigma(\beta(Wx+b))$.  Conversely, the softmax equation gives the KKT condition.  Thus fixed points are exactly interior KKT points of $\Phi_\beta$.

\paragraph{Entropy curvature on the tangent space.}
For $x\in\Delta^\circ$ and $v\in\Tspace$,
\begin{equation}
    v^\top\nabla^2H(x)v=-\sum_i\frac{v_i^2}{x_i}.
\end{equation}
The key lower bound is
\begin{equation}
    \sum_i\frac{v_i^2}{x_i}\ge2\|v\|_2^2,
    \qquad v\in\Tspace.
\label{eq:app-tangent-two}
\end{equation}
To prove it, write $P=\{i:v_i\ge0\}$, $N=\{i:v_i<0\}$, $a=\sum_{i\in P}v_i=-\sum_{i\in N}v_i$, and $s=\sum_{i\in P}x_i$.  Cauchy--Schwarz gives
\begin{equation}
    \sum_i\frac{v_i^2}{x_i}
    \ge \frac{a^2}{s}+\frac{a^2}{1-s}
    \ge4a^2.
\end{equation}
Since the positive and negative parts of $v$ each have $\ell_1$ mass $a$,
\begin{equation}
    \|v\|_2^2=\|v_+\|_2^2+\|v_-\|_2^2\le a^2+a^2=2a^2.
\end{equation}
Combining the two inequalities yields \eqref{eq:app-tangent-two}.  Equality holds for $n=2$, $x=(1/2,1/2)$, and $v=(1,-1)$.

\begin{proof}[Proof of the symmetric statement in Theorem~\ref{thm:C3_unique_fp}]
For $x\in\Delta^\circ$ and $v\in\Tspace$, \eqref{eq:app-grad-hess-phi} and \eqref{eq:app-tangent-two} give
\begin{equation}
    v^\top\nabla^2\Phi_\beta(x)v
    =-\sum_i\frac{v_i^2}{x_i}+\beta v^\top Wv
    \le -2\|v\|_2^2+\beta\kappa\|v\|_2^2
    =-(2-\beta\kappa)\|v\|_2^2.
\end{equation}
If $\kappa\le0$, this is at most $-2\|v\|_2^2$; if $\kappa>0$ and $\beta\kappa<2$, it is strictly negative with modulus $2-\beta\kappa$.  Hence $\Phi_\beta$ is strictly concave on every feasible segment in $\Delta$.

Compactness gives a global maximizer, the boundary argument above makes it interior, and strict concavity makes it unique.  The KKT equivalence above identifies this maximizer with a logit fixed point.  Conversely, any fixed point is an interior KKT point; for a concave differentiable function, such a point is a global maximizer because for every $y\in\Delta$,
\begin{equation}
    \Phi_\beta(y)\le\Phi_\beta(x)+\langle\nabla\Phi_\beta(x),y-x\rangle=\Phi_\beta(x).
\end{equation}
Strict concavity therefore rules out any second fixed point.
\end{proof}

\subsection{Product-simplex potential proof}

\begin{proof}[Proof of the symmetric product statement in Theorem~\ref{thm:product-potential}]
On the relative interior of $\DeltaProd$, the gradient of \eqref{eq:product-potential} in block $a$ is
\begin{equation}
    \nabla_{x^a}\Psi(x)_i=-\frac{1}{\beta_a}(1+\log x_i^a)+(Wx+b)^a_i.
\end{equation}
The KKT condition on each block is therefore
\begin{equation}
    -\frac{1}{\beta_a}(1+\log x_i^a)+(Wx+b)^a_i=\lambda_a,
\end{equation}
which is equivalent, after exponentiating and normalizing within block $a$, to
\begin{equation}
    x^a=\sigma_a\left(\beta_a (Wx+b)^a\right).
\end{equation}
Thus fixed points of \eqref{eq:product-map} are exactly interior KKT points of $\Psi$.  As in the single-simplex case, the entropy term sends every maximizer into the relative interior of each block.

For $v\in\Tspace$,
\begin{equation}
    v^\top\nabla^2\Psi(x)v
    =-\sum_{a=1}^m\frac{1}{\beta_a}\sum_i\frac{(v_i^a)^2}{x_i^a}+v^\top Wv.
\end{equation}
By applying \eqref{eq:app-tangent-two} blockwise,
\begin{equation}
    v^\top\nabla^2\Psi(x)v
    \le -2\sum_{a=1}^m\frac{1}{\beta_a}\|v^a\|_2^2+v^\top Wv.
\end{equation}
Let $y=B_\beta^{-1/2}v$.  Since $B_\beta$ is scalar on each block, $y\in\Tspace$ and $v=B_\beta^{1/2}y$.  Hence
\begin{equation}
    v^\top\nabla^2\Psi(x)v
    \le -2\|y\|_2^2
    +y^\top B_\beta^{1/2}\Pi W\Pi B_\beta^{1/2}y.
\end{equation}
If the largest tangent eigenvalue of $B_\beta^{1/2}\Pi W\Pi B_\beta^{1/2}$ is strictly smaller than $2$, the Hessian is strictly negative on every nonzero feasible direction.  Therefore $\Psi$ is strictly concave on the affine hull of $\DeltaProd$.  The same compactness, interiority, and KKT argument used above gives a unique maximizer, and this maximizer is the unique block-logit fixed point.
\end{proof}

\section{Sharpness via the two-action pitchfork}
\label{app:sharpness-proofs}

\begin{proof}[Proof of Proposition~\ref{prop:pitchfork-boundary}]
Write
\begin{equation}
    x(m)=\left(\frac{1+m}{2},\frac{1-m}{2}\right),
    \qquad m\in[-1,1].
\end{equation}
For $W=\left(\begin{smallmatrix}0&-1\\-1&0\end{smallmatrix}\right)$,
\begin{equation}
    Wx(m)=\left(-\frac{1-m}{2},-\frac{1+m}{2}\right),
    \qquad
    \beta[(Wx)_1-(Wx)_2]=\beta m.
\end{equation}
Thus
\begin{equation}
    F_\beta(x(m))_1-F_\beta(x(m))_2
    =\frac{e^{\beta(Wx)_1}-e^{\beta(Wx)_2}}{e^{\beta(Wx)_1}+e^{\beta(Wx)_2}}
    =\tanh\!\left(\frac{\beta m}{2}\right),
\end{equation}
so fixed points are exactly the roots of
\begin{equation}
    m=\tanh\!\left(\frac{\beta m}{2}\right).
\end{equation}
Let $a=\beta/2$.  The equation is odd.  If $a\le1$, then for $m>0$,
\begin{equation}
    \tanh(am)<am\le m,
\end{equation}
so the only root is $m=0$.  If $a>1$, then $g(m)=\tanh(am)-m$ satisfies $g(0)=0$, $g'(0)=a-1>0$, and $g(1)=\tanh(a)-1<0$, so a positive root exists.  It is unique because $r(m)=\tanh(am)/m$ is strictly decreasing on $(0,1]$:
\begin{equation}
    r'(m)=\frac{am\sech^2(am)-\tanh(am)}{m^2}<0,
\end{equation}
since $y\sech^2y<\tanh y$ for every $y>0$, equivalently $y<\sinh y\cosh y$.  Oddness gives exactly two nonzero roots $\pm m_\beta$.  Finally, $\Tspace=\mathrm{span}\{(1,-1)\}$ and $W(1,-1)^\top=(1,-1)^\top$, hence $\lambda_{\max}(\Wt)=1$.
\end{proof}

\section{A worked logit feedback example: from high-noise stability to the responsive-but-monostable regime}
\label{app:responsive-monostable-example}

This appendix proves Proposition~\ref{prop:worked-two-action-phase-diagram} in full detail.  The goal is to make explicit what the constant-two improvement changes in the phase diagram of a concrete logit feedback system.

Consider the two-action simplex
\begin{equation}
    \Delta_2=\{x\in\mathbb R^2_{\ge0}:x_1+x_2=1\}
\end{equation}
and the affine feedback matrix
\begin{equation}
    W=\begin{pmatrix}0&-1\\-1&0\end{pmatrix}.
\end{equation}
The logit response map is
\begin{equation}
    F_\beta(x)=\softmax(\beta Wx),
\end{equation}
and the continuous-time logit adjustment dynamics are
\begin{equation}
    \dot x=F_\beta(x)-x.
\label{eq:worked-logit-ode-x}
\end{equation}

\paragraph{Step 1: the tangent feedback norm is one.}
The tangent space of $\Delta_2$ is
\begin{equation}
    \Tspace=\{v\in\mathbb R^2:v_1+v_2=0\}
    =\mathrm{span}\{(1,-1)\}.
\end{equation}
Since
\begin{equation}
    W(1,-1)^\top=(1,-1)^\top,
\end{equation}
the restriction of $\Pi W\Pi$ to $\Tspace$ acts as multiplication by $1$.  Therefore
\begin{equation}
    \|\Pi W\Pi\|_{\Tspace\to\Tspace}=1.
\end{equation}
The classical unit-softmax contraction certificate would require
\begin{equation}
    \beta\|\Pi W\Pi\|_{\Tspace\to\Tspace}<1,
\end{equation}
hence it certifies stability only for $\beta<1$.  By contrast, Theorem~\ref{thm:main} requires
\begin{equation}
    \frac{\beta}{2}\|\Pi W\Pi\|_{\Tspace\to\Tspace}<1,
\end{equation}
which is exactly
\begin{equation}
    \beta<2.
\end{equation}

\paragraph{Step 2: reduction to one scalar variable.}
Write the population imbalance as
\begin{equation}
    m=x_1-x_2\in[-1,1],
\end{equation}
so that
\begin{equation}
    x_1=\frac{1+m}{2},
    \qquad
    x_2=\frac{1-m}{2}.
\end{equation}
For this $x$,
\begin{equation}
    Wx=
    \left(
        -\frac{1-m}{2},
        -\frac{1+m}{2}
    \right).
\end{equation}
Hence the payoff difference is
\begin{equation}
    (Wx)_1-(Wx)_2=m.
\end{equation}
The softmax difference therefore satisfies
\begin{equation}
\begin{aligned}
    F_\beta(x)_1-F_\beta(x)_2
    &=
    \frac{\exp(\beta(Wx)_1)-\exp(\beta(Wx)_2)}
         {\exp(\beta(Wx)_1)+\exp(\beta(Wx)_2)}  \\
    &=
    \tanh\!\left(\frac{\beta[(Wx)_1-(Wx)_2]}{2}\right)  \\
    &=
    \tanh\!\left(\frac{\beta m}{2}\right).
\end{aligned}
\end{equation}
Taking the difference between the two coordinates in \eqref{eq:worked-logit-ode-x} gives the scalar dynamics
\begin{equation}
    \dot m
    =
    \tanh\!\left(\frac{\beta m}{2}\right)-m.
\label{eq:worked-m-dynamics}
\end{equation}
Thus fixed points are exactly the solutions of
\begin{equation}
    m=\tanh\!\left(\frac{\beta m}{2}\right).
\label{eq:worked-m-fixed}
\end{equation}

\paragraph{Step 3: uniqueness and convergence below the bifurcation threshold.}
Let
\begin{equation}
    a=\frac{\beta}{2},
    \qquad
    g(m)=\tanh(am)-m.
\end{equation}
If $\beta<2$, then $a<1$.  For $m>0$,
\begin{equation}
    \tanh(am)<am<m,
\end{equation}
so
\begin{equation}
    g(m)<0.
\end{equation}
By oddness, for $m<0$ we have $g(m)>0$.  Therefore the only zero of $g$ in $[-1,1]$ is $m=0$.

The same sign argument proves global convergence.  If $m(t)>0$, then $\dot m(t)<0$; if $m(t)<0$, then $\dot m(t)>0$.  Hence every trajectory moves monotonically toward $0$ and cannot cross it.  More quantitatively, for $m>0$,
\begin{equation}
    \dot m
    =
    \tanh(am)-m
    \le
    am-m
    =
    -(1-a)m.
\end{equation}
The symmetric inequality holds for $m<0$.  Therefore
\begin{equation}
    |m(t)|
    \le
    e^{-(1-a)t}|m(0)|
    =
    e^{-(1-\beta/2)t}|m(0)|.
\end{equation}
Thus for every $\beta<2$, the unique equilibrium $m=0$ is globally exponentially stable.

At the critical value $\beta=2$, the equation becomes
\begin{equation}
    \dot m=\tanh(m)-m.
\end{equation}
For $m>0$, $\tanh(m)<m$, and for $m<0$, $\tanh(m)>m$, so the equilibrium is still unique and globally attracting, although the linear exponential rate vanishes at the origin.  This is the bifurcation point.

\paragraph{Step 4: appearance of multiple stable equilibria above the threshold.}
Now suppose $\beta>2$, so $a>1$.  Then
\begin{equation}
    g(0)=0,
    \qquad
    g'(0)=a-1>0,
\end{equation}
while
\begin{equation}
    g(1)=\tanh(a)-1<0.
\end{equation}
By continuity, there exists at least one positive root $m_\beta\in(0,1)$.

This positive root is unique.  Indeed, for $m>0$ define
\begin{equation}
    r(m)=\frac{\tanh(am)}{m}.
\end{equation}
A positive fixed point is exactly a solution of $r(m)=1$.  We show that $r$ is strictly decreasing.  Its derivative is
\begin{equation}
    r'(m)
    =
    \frac{am\sech^2(am)-\tanh(am)}{m^2}.
\end{equation}
Let $y=am>0$.  The numerator has the same sign as
\begin{equation}
    y\sech^2(y)-\tanh(y).
\end{equation}
This quantity is negative because
\begin{equation}
    y\sech^2(y)<\tanh(y)
    \quad\Longleftrightarrow\quad
    y<\sinh(y)\cosh(y),
\end{equation}
and the last inequality holds for every $y>0$.  Hence $r'(m)<0$ on $(0,1]$, so there is exactly one positive root.  By oddness, there is exactly one negative root, denoted $-m_\beta$.

The stability picture follows from the sign of $g$.  Since $g'(0)>0$, the origin is unstable.  On $(0,m_\beta)$ one has $g(m)>0$, and on $(m_\beta,1]$ one has $g(m)<0$, so $m_\beta$ is attracting.  By symmetry, $-m_\beta$ is also attracting.  Therefore for every $\beta>2$, the system has three equilibria:
\begin{equation}
    -m_\beta,\qquad 0,\qquad m_\beta,
\end{equation}
with the two nonzero equilibria stable and the symmetric equilibrium unstable.

\paragraph{Conclusion.}
The old unit-softmax condition certifies only $\beta<1$, a strongly noisy regime in which feedback is heavily damped.  The constant-two condition certifies $\beta<2$, which is the entire pre-bifurcation region of this model.  The newly certified interval
\begin{equation}
    1\le\beta<2
\end{equation}
is therefore qualitatively important: agents are already responsive to payoff differences, but the system is still monostable and globally predictable.  The constant-two theorem turns this regime from an previously uncertified interval into a rigorous global-stability region.

\section{Consequences and comparisons}
\label{sec:consequences-comparison}

\subsection{A replacement principle for affine softmax-response contractions}
\label{subsec:replacement-principle}

\begin{corollary}[Constant-two replacement for softmax-response contraction templates]
\label{cor:response-template}
Let $G:\DeltaProd\to\mathbb R^N$ be an affine payoff map $G(x)=Wx+b$, and let
\begin{equation}
    F(x)=\sigma_{\rm blk}(B_\beta G(x))
\end{equation}
be the corresponding block-softmax response map.  Any convergence statement whose sufficient condition is that $F$ is a contraction on the product simplex remains valid under
\begin{equation}
    \frac12\|B_\beta\Pi W\Pi\|_{\Tspace\to\Tspace}<1.
\end{equation}
In particular, in affine logit systems the usual unit-softmax sufficient condition
\begin{equation}
    \|B_\beta W\|_2<1
\end{equation}
obtained by treating softmax as a unit-scale Lipschitz map can be replaced by the tangent-projected constant-two condition
\begin{equation}
    \|B_\beta\Pi W\Pi\|_{\Tspace\to\Tspace}<2.
\end{equation}
For a single simplex this reduces to $\beta\|\Pi W\Pi\|_{\Tspace\to\Tspace}<2$.
\end{corollary}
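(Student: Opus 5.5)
The corollary should follow almost directly from Theorem~\ref{thm:product-contraction}; the work is bookkeeping. The plan has three steps: identify the map $F$ with the block logit map \eqref{eq:product-map}, invoke the product contraction theorem, and then check that the constant-two condition is weaker than the unit-softmax condition.

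\textbf{Step 1: identification.} With $G(x)=Wx+b$ and $B_\beta$ scalar on each block, the $a$-th block of $\sigma_{\rm blk}(B_\beta G(x))$ is $\sigma_a(\beta_a(b^a+\sum_c W^{ac}x^c))$. So $F$ is exactly the map in \eqref{eq:product-map}.

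\textbf{Step 2: contraction.} Theorem~\ref{thm:product-contraction} then gives that $F$ is a contraction on $(\DeltaProd,\|\cdot\|_2)$ with factor $q_{\mathrm{prod}}=\tfrac12\|B_\beta\Pi W\Pi\|_{\Tspace\to\Tspace}<1$. Any conclusion whose only hypothesis is "$F$ is a contraction on the product simplex" therefore applies verbatim. This covers uniqueness, Picard rates, the a posteriori bounds, and the continuous-time adjustment, as in the ODE part of Theorem~\ref{thm:NS_contraction}. I would state explicitly that the contraction holds in the same Euclidean metric on the same complete convex set. That way no change of metric hides in the templates.

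\textbf{Step 3: comparison with the unit-softmax condition.} This is the only step with content, and I expect it to be the main obstacle. The phrase "can be replaced" is strongest if the old condition implies the new one, so I aim to show
\[
\|B_\beta\Pi W\Pi\|_{\Tspace\to\Tspace}\le\|B_\beta W\|_2 .
\]

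\emph{Commuting the projection.} Since $B_\beta$ is scalar on each block, it commutes with the block projection $\Pi$. Hence $B_\beta\Pi W\Pi=\Pi(B_\beta W)\Pi$.

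\emph{Norm bound.} Orthogonal projections have norm at most $1$. Restricting to $\Tspace$ and compressing by $\Pi$ can therefore only decrease the operator norm, which gives the displayed inequality.

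\emph{Conclusion.} If $\|B_\beta W\|_2<1$, then $\|B_\beta\Pi W\Pi\|_{\Tspace\to\Tspace}<1<2$. The constant-two condition is thus implied by the unit-softmax one, and it is strictly weaker in examples such as Proposition~\ref{prop:worked-two-action-phase-diagram}.

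\textbf{Single simplex.} Taking $m=1$ makes $B_\beta=\beta I$, which yields $\beta\|\Pi W\Pi\|_{\Tspace\to\Tspace}<2$, matching Theorem~\ref{thm:main}.
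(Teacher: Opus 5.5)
Your proposal is correct and follows the paper's proof: both obtain the corollary by invoking Theorem~\ref{thm:product-contraction} directly, and both get the single-simplex case from $m=1$, $B_\beta=\beta I$. Your Step~3 adds something the paper leaves informal. The paper only says that the old condition comes from the coarse estimate $\|\sigma_{\rm blk}(B_\beta z)-\sigma_{\rm blk}(B_\beta z')\|_2\le\|B_\beta(z-z')\|_2$, whereas you use that $B_\beta$ commutes with $\Pi$ to prove $\|B_\beta\Pi W\Pi\|_{\Tspace\to\Tspace}\le\|B_\beta W\|_2$, so ``can be replaced'' becomes a verified weakening.
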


Many softmax-response analyses first prove that a response map is contractive and then invoke Banach's theorem or a standard contraction-flow argument.  In the affine endogenous-payoff case, Theorem~\ref{thm:product-contraction} supplies the sharper contraction input.  The result does not automatically cover non-Euclidean, coordinate-wise, asynchronous, or stochastic-approximation analyses unless their proof reduces to this deterministic Euclidean contraction property.

\subsection{Separation from block \texorpdfstring{$\ell_1$}{l1}/Dobrushin certificates}
\label{subsec:dobrushin-separation}

Dobrushin-style criteria control absolute block influence, usually in total variation or $\ell_1$ geometry.  The Euclidean tangent certificate instead sees signed cancellation.

\begin{proposition}[A product-simplex separation from Dobrushin-type certificates]
\label{prop:l2-dobrushin-separation}
For every power of two $m$ and every $\alpha\in(2/\sqrt m,2)$, there exists a binary product-simplex logit system with $m$ blocks and unit inverse temperatures such that
\begin{equation}
    \frac12\|\Pi W\Pi\|_{\Tspace\to\Tspace}=\frac\alpha2<1,
\end{equation}
so Theorem~\ref{thm:product-contraction} certifies global uniqueness and convergence, while the natural block $\ell_1$/Dobrushin certificate based on
\begin{equation}
    C_{ab}=\frac12\|\Pi_a W^{ab}\Pi_b\|_{1\to1}
\end{equation}
fails because
\begin{equation}
    \rho(C)=\frac{\alpha\sqrt m}{2}>1.
\end{equation}
\end{proposition}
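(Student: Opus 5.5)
The plan is to build the example from a Sylvester--Hadamard matrix, so that the block couplings have unit absolute size but cancel in sign. Let $m$ be a power of two and let $H\in\{\pm1\}^{m\times m}$ be a Sylvester--Hadamard matrix, so $HH^\top=mI$. Each block is binary ($n_a=2$), so $\Tspace_a=\mathrm{span}\{u\}$ with $u=(1,-1)^\top/\sqrt2$, and $\Pi_a=uu^\top$. Take all $\beta_a=1$ and $b=0$, and define the blocks
\[
W^{ab}=c\,H_{ab}\,uu^\top,\qquad c=\alpha/\sqrt m,
\]
with diagonal blocks included, since $H_{aa}=\pm1$ is harmless. Then $\Pi_a W^{ab}\Pi_b=W^{ab}$, so $\Pi W\Pi=W$.

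First I compute the Euclidean tangent norm. The map $y\mapsto(y_1u,\dots,y_mu)$ is an isometry $\mathbb R^m\to\Tspace$. In these coordinates $\Pi W\Pi|_{\Tspace}$ acts as $cH$, because $u^\top u=1$. Since $H/\sqrt m$ is orthogonal, $\|cH\|_2=c\sqrt m=\alpha$. Hence $\tfrac12\|\Pi W\Pi\|_{\Tspace\to\Tspace}=\alpha/2<1$ because $\alpha<2$, and Theorem~\ref{thm:product-contraction} (with $B_\beta=I$) gives global contraction, a unique fixed point and convergence.

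Next I compute the $\ell_1$ influences. The matrix $uu^\top=\tfrac12\left(\begin{smallmatrix}1&-1\\-1&1\end{smallmatrix}\right)$ has maximal column $\ell_1$ norm $1$. Therefore $\|\Pi_aW^{ab}\Pi_b\|_{1\to1}=c|H_{ab}|=c$, so $C_{ab}=c/2$ for all $a,b$ and $C=\tfrac c2\mathbf 1\mathbf 1^\top$. Its spectral radius is $\rho(C)=\tfrac{cm}{2}=\tfrac{\alpha\sqrt m}{2}$, which exceeds $1$ exactly because $\alpha>2/\sqrt m$. I would finish by remarking that $\rho(C)<1$ is the standard Dobrushin-type sufficient condition, so failure here means that this certificate gives no conclusion.

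There is no real obstacle; the only care needed is bookkeeping. One point is to check the $1\to1$ norm convention (maximum column sum) for $uu^\top$. The other is to confirm that the tangent operator is exactly unitarily equivalent to $cH$. The key idea is that Hadamard signs let the Euclidean norm see cancellation, giving growth $\sqrt m$, while the $\ell_1$ aggregation sees only absolute values, giving growth $m$.
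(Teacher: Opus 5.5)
Your proposal is correct and matches the paper's proof step for step. Both use the same Sylvester--Hadamard block construction $W^{ab}=\frac{\alpha}{\sqrt m}H_{ab}\,uu^\top$, reduce the tangent operator to $\frac{\alpha}{\sqrt m}H_m$ in the per-block orthonormal basis $\{u\}$, and compute $\|uu^\top\|_{1\to1}=1$, so that $C=\frac{\alpha}{2\sqrt m}\mathbf 1\mathbf 1^\top$ and $\rho(C)=\alpha\sqrt m/2$.
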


Taking $\alpha=1$ and $m\ge8$ gives a system whose Euclidean contraction factor is $1/2$, while the block $\ell_1$ influence radius already exceeds one.  This signed block-cancellation example, visualized in Figure~\ref{fig:dobrushin-separation}, shows that the product-simplex spectral condition is not just a different notation for a Dobrushin bound.

\subsection{What the factor two buys}
\label{subsec:what-factor-two-buys}

The improvement from $1$ to $2$ is most visible in the interval
\begin{equation}
    1\le \beta\|\Wt\|_2<2.
\end{equation}
Classical unit-scale contraction no longer certifies this region, yet the two-action potential model remains unique throughout it.  The results here show that this interval is part of the globally stable high-temperature regime once the exact softmax geometry is used.  In applications where $\beta$ is an inverse noise level or rationality parameter, this doubles the certified uniqueness region relative to the coarse certificate.  In dynamical terms, the same enlargement applies to the region where repeated logit response and continuous logit adjustment are guaranteed to converge from every initialization.

\section{Proofs for consequences and comparisons}
\label{app:consequences-comparison}

\begin{proof}[Proof of Corollary~\ref{cor:response-template}]
Theorem~\ref{thm:product-contraction} states exactly that
\begin{equation}
    \frac12\|B_\beta\Pi W\Pi\|_{\Tspace\to\Tspace}<1
\end{equation}
implies that the block-softmax response map $F$ is a global contraction on the product simplex.  Therefore any result whose proof requires only this contraction property can be invoked with this sharper sufficient condition.  The displayed unit-softmax condition is the one obtained from the coarse estimate
\begin{equation}
    \|\sigma_{\rm blk}(B_\beta z)-\sigma_{\rm blk}(B_\beta z')\|_2
    \le \|B_\beta(z-z')\|_2
\end{equation}
combined with $z-z'=W(x-y)$.  Theorem~\ref{thm:product-contraction} replaces this ambient bound by the covariance bound and the tangent projection, giving the condition
\begin{equation}
    \|B_\beta\Pi W\Pi\|_{\Tspace\to\Tspace}<2.
\end{equation}
The single-simplex statement is the special case $m=1$ and $B_\beta=\beta I$.
\end{proof}

\begin{proof}[Proof of Proposition~\ref{prop:l2-dobrushin-separation}]
Let every block be binary, so $\Delta_a=\Delta_2$ and $\Tspace_a=\mathrm{span}\{u\}$ with
\begin{equation}
    u=\frac1{\sqrt2}(1,-1)^\top.
\end{equation}
For every power of two $m$, take a Sylvester Hadamard matrix $H_m\in\{\pm1\}^{m\times m}$, so that $m^{-1/2}H_m$ is orthogonal.  Define the block interaction matrix by
\begin{equation}
    W^{ab}=\frac{\alpha}{\sqrt m}(H_m)_{ab}uu^\top,
    \qquad a,b\in[m],
\end{equation}
and take all inverse temperatures equal to one.  Since $u\in\Tspace_a$ and $uu^\top$ annihilates the all-ones direction, $\Pi_a W^{ab}\Pi_b=W^{ab}$.

Represent a tangent vector $h\in\Tspace$ as $h^b=t_b u$.  Then
\begin{equation}
    (\Pi W\Pi h)^a
    =\sum_{b=1}^m \frac{\alpha}{\sqrt m}(H_m)_{ab}uu^\top(t_bu)
    =\left(\sum_{b=1}^m\frac{\alpha}{\sqrt m}(H_m)_{ab}t_b\right)u.
\end{equation}
Thus the tangent operator is represented in the orthonormal block basis by
\begin{equation}
    A=\frac{\alpha}{\sqrt m}H_m,
\end{equation}
so $\|\Pi W\Pi\|_{\Tspace\to\Tspace}=\|A\|_2=\alpha$.  Hence
\begin{equation}
    \frac12\|\Pi W\Pi\|_{\Tspace\to\Tspace}=\frac\alpha2<1.
\end{equation}
The product-simplex contraction theorem therefore certifies global uniqueness and convergence.

For the block $\ell_1$ influence matrix, note that
\begin{equation}
    uu^\top=\frac12\begin{pmatrix}1&-1\\-1&1\end{pmatrix},
    \qquad
    \|uu^\top\|_{1\to1}=1.
\end{equation}
Therefore
\begin{equation}
    C_{ab}=\frac12\|W^{ab}\|_{1\to1}
    =\frac{\alpha}{2\sqrt m}
\end{equation}
for all $a,b$.  Hence $C$ is the all-ones matrix multiplied by $\alpha/(2\sqrt m)$, and
\begin{equation}
    \rho(C)=m\cdot\frac{\alpha}{2\sqrt m}=\frac{\alpha\sqrt m}{2}>1
\end{equation}
by the assumption $\alpha>2/\sqrt m$.  The natural block $\ell_1$/Dobrushin certificate therefore fails.
\end{proof}

\section{Numerical illustrations}
\label{app:numerical-illustrations}

This appendix collects small deterministic numerical illustrations.  Their purpose is to show how the certificates behave in finite-dimensional instances beyond the two-action phase diagram.  All figures in this section are generated by \texttt{scripts/make\_figures.py} with fixed random seeds and no external data.

\subsection{Two-action phase diagram}
\label{app:num-two-action-phase-diagram}

The two-action model is exactly reducible to the scalar equation analyzed in Proposition~\ref{prop:worked-two-action-phase-diagram}.  The following figure visualizes that closed-form phase diagram and the associated deterministic logit-adjustment trajectories.

\begin{figure}[ht]
\centering
\includegraphics[width=\textwidth]{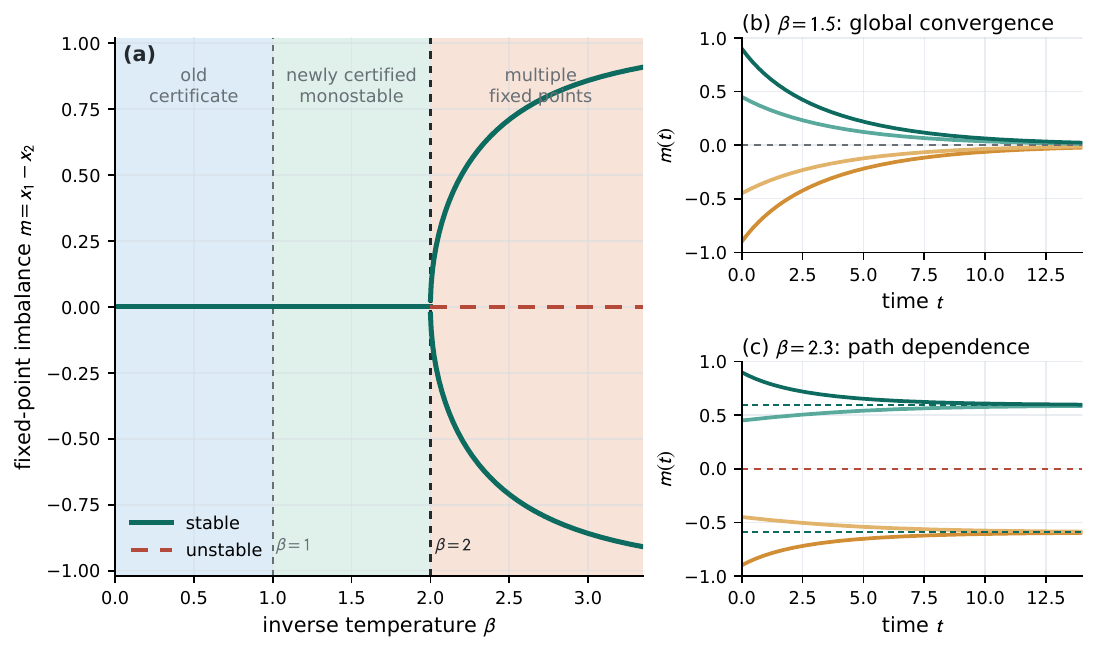}
\caption{
\textbf{Numerical illustration of the two-action logit phase diagram.}
Panel (a) plots the fixed points of \(m=\tanh(\beta m/2)\).  Solid curves are stable equilibria and the dashed curve is unstable.  The old unit-softmax certificate covers only \(\beta<1\), while the covariance-calibrated theorem certifies the full pre-bifurcation interval \(\beta<2\).  Panels (b) and (c) show deterministic logit-adjustment trajectories from several initial imbalances, illustrating global convergence below the threshold and path dependence above it.
}
\label{fig:logit-phase-diagram}
\end{figure}

Figure~\ref{fig:logit-phase-diagram} visualizes the exact scalar phase diagram behind Proposition~\ref{prop:worked-two-action-phase-diagram}.  The figure is a deterministic illustration of the closed-form fixed-point equation and ODE reduction.

\subsection{Certificate gain for random affine logit systems}
\label{app:num-certificate-gain}

For a single-simplex affine logit system, the coarse ambient certificate gives
\begin{equation}
    \beta_{\rm old}=\frac{1}{\|W\|_2},
    \qquad
    \beta_{\rm new}=\frac{2}{\|\Pi W\Pi\|_{\Tspace\to\Tspace}}.
\end{equation}
We report the certified range gain
\begin{equation}
    \frac{\beta_{\rm new}}{\beta_{\rm old}}
    =
    \frac{2\|W\|_2}{\|\Pi W\Pi\|_{\Tspace\to\Tspace}},
\label{eq:num-range-gain}
\end{equation}
or equivalently its reciprocal \(\beta_{\rm old}/\beta_{\rm new}\).  For ordinary Gaussian matrices, the tangent norm is usually close to the ambient norm, so the visible gain is primarily the factor two.  To isolate the geometric correction, we also generate payoff-shift components of the form
\begin{equation}
    W=W_{\rm tan}+\mathbf 1 a^\top+b\mathbf 1^\top,
\end{equation}
with the symmetric variant \(W=W_{\rm tan}+\mathbf 1 a^\top+a\mathbf 1^\top\).  Since
\begin{equation}
    \Pi(\mathbf 1 a^\top+b\mathbf 1^\top)\Pi=0,
\end{equation}
these components can inflate \(\|W\|_2\) without changing the tangent interaction seen by softmax.

\begin{figure}[ht]
\centering
\includegraphics[width=0.88\textwidth]{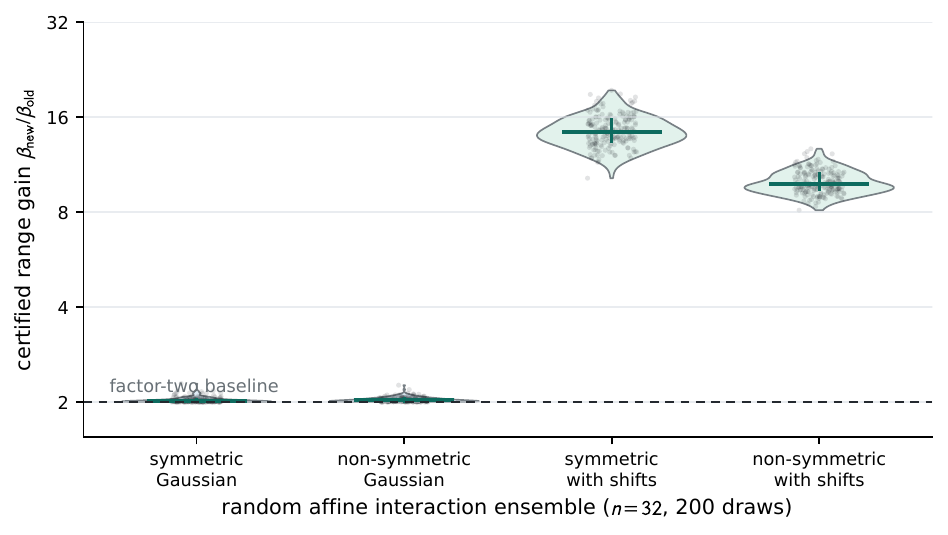}
\caption{
\textbf{Certificate gain for random affine logit systems.}
Each violin summarizes 200 fixed-seed draws with \(n=32\).  The plotted quantity is \(\beta_{\rm new}/\beta_{\rm old}=2\|W\|_2/\|\Pi W\Pi\|_{\Tspace\to\Tspace}\).  Ordinary Gaussian symmetric and non-symmetric matrices mainly show the factor-two improvement.  Adding payoff-shift components leaves the tangent certificate unchanged but shrinks the ambient certificate, so the certified inverse-temperature range expands by more than a constant factor.
}
\label{fig:random-certificate-gain}
\end{figure}

\subsection{Convergence in the newly certified regime}
\label{app:num-newly-certified-convergence}

The next illustration checks nontrivial instances in which the old certificate fails but the new certificate succeeds.  We generate \(12\) non-symmetric shifted Gaussian systems with \(n=20\), random biases, and choose
\begin{equation}
    \beta=0.72\,\beta_{\rm new}.
\end{equation}
Because \(\beta_{\rm new}/\beta_{\rm old}\ge2\), this choice satisfies
\begin{equation}
    \beta_{\rm old}<\beta<\beta_{\rm new}
\end{equation}
for every displayed instance.  Thus the ambient unit-softmax certificate fails, while Theorem~\ref{thm:main} certifies contraction.  For each instance, we run Picard iteration from ten independent Dirichlet initializations and plot the maximum pairwise distance among the iterates.

\begin{figure}[ht]
\centering
\includegraphics[width=\textwidth]{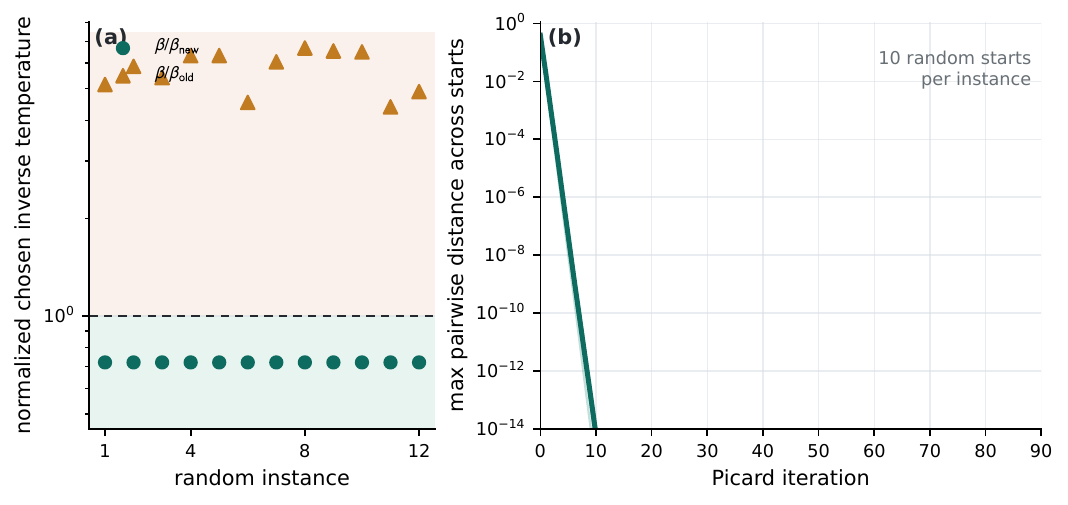}
\caption{
\textbf{Picard convergence in the newly certified interval.}
Panel (a) shows that the chosen \(\beta\) lies above the old threshold and below the new threshold for each random shifted non-symmetric instance.  Panel (b) shows the collapse of ten random initializations per instance under Picard iteration.  The plot visualizes deterministic Picard convergence in finite-dimensional random affine systems certified by the theorem.
}
\label{fig:newly-certified-convergence}
\end{figure}

\subsection{Product-simplex separation from Dobrushin-style certificates}
\label{app:num-dobrushin-separation}

The product-simplex theorem exploits signed Euclidean tangent cancellation.  The Hadamard block construction in Proposition~\ref{prop:l2-dobrushin-separation} gives, for \(\alpha=1\),
\begin{equation}
    q_{\ell_2}
    =
    \frac12\|\Pi W\Pi\|_{\Tspace\to\Tspace}
    =
    \frac12,
    \qquad
    \rho(C)=\frac{\sqrt m}{2}.
\end{equation}
Thus the Euclidean product-simplex certificate remains valid as the number of binary blocks grows, while the natural block \(\ell_1\)/Dobrushin-style certificate fails once \(m\ge8\).

\begin{figure}[ht]
\centering
\includegraphics[width=0.74\textwidth]{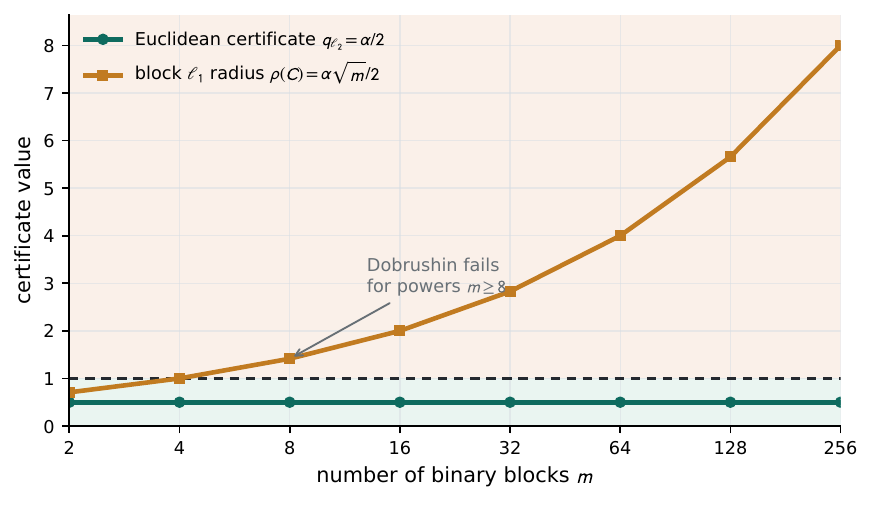}
\caption{
\textbf{Signed block cancellation separates the Euclidean and Dobrushin certificates.}
For the Hadamard block construction with \(\alpha=1\), the Euclidean contraction factor remains \(q_{\ell_2}=1/2\) for all powers-of-two block counts, while the natural block \(\ell_1\) influence radius grows as \(\rho(C)=\sqrt m/2\) and exceeds the uniqueness threshold once \(m\ge8\).
}
\label{fig:dobrushin-separation}
\end{figure}

\section{Sharpness and comparison}
\label{sec:discussion}

\paragraph{Sharpness taxonomy.}
The word sharp has several meanings here, and separating them avoids overclaiming.  First, the softmax covariance constant is sharp: $\sup_p\|\Sigma(p)\|_2=1/2$, with equality on a two-point distribution.  Second, the arbitrary-interaction theorem is sharp as a universal Euclidean tangent spectral contraction certificate: no proof that depends only on the global covariance norm and $\|\Wt\|_2$ can improve the constant $2$.  Third, in the symmetric potential class, the constant is sharp as an entropy-versus-interaction curvature threshold.  Fourth, the two-action symmetric model realizes this threshold as an actual uniqueness boundary: below $2$ there is one fixed point, and above $2$ there are three. Special matrices may remain unique far beyond the global contraction region.

\paragraph{Relation to Dobrushin-style $\ell_1$ criteria.}
Dobrushin criteria control how much one block's conditional distribution can change when another block is perturbed, usually in total variation or $\ell_1$ geometry.  The present result is complementary.  Since the softmax Jacobian also satisfies an $\ell_1$ sensitivity bound of scale $1/2$, one can form a block influence matrix
\[
    C_{ab}=\frac{\beta_a}{2}\|\Pi_a W^{ab}\Pi_b\|_{1\to1}
\]
and obtain a Dobrushin-style certificate $\rho(C)<1$.  This may be sharper for sparse, directed, or nearly triangular influence patterns.  For example, a strictly upper-triangular block influence matrix has Dobrushin spectral radius zero even when individual block influences are large, while the Euclidean operator norm of the corresponding signed tangent interaction can be large.  The Euclidean spectral condition in Theorem~\ref{thm:product-contraction} is instead designed to exploit cancellations, low-rank structure, and symmetric curvature.  Proposition~\ref{prop:l2-dobrushin-separation} makes the opposite separation explicit: signed orthogonal block couplings can have bounded tangent spectral norm while their absolute block influence matrix has spectral radius of order $\sqrt m$ larger.  Neither condition uniformly dominates the other; they are different high-temperature projections of the same softmax covariance geometry.

\paragraph{What the factor two buys.}
The improvement from $1$ to $2$ is most visible in the high-temperature interval
\[
    1\le \beta\|\Wt\|_2<2.
\]
Classical unit-scale contraction no longer certifies this regime, yet the two-action potential model remains unique throughout it.  The results here show that this interval is part of the globally stable high-temperature regime once the exact softmax geometry is used.  In applications where $\beta$ is an inverse noise level or rationality parameter, this means that the guaranteed uniqueness region is twice as large as the one obtained from the coarse certificate.  In dynamical terms, the same enlargement applies to the region where repeated logit response and continuous logit adjustment are guaranteed to converge from every initialization.

\end{document}